\documentclass[lettersize,journal]{IEEEtran}

\usepackage[OT1]{fontenc} 
\usepackage{adjustbox}
\usepackage{algorithm}
\usepackage{algpseudocode} 
\usepackage{amsfonts}       
\usepackage{amsmath}
\usepackage{amssymb}
\usepackage{amsthm}
\usepackage{arydshln}
\usepackage{balance}
\usepackage{bbm}
\usepackage{bm}
\usepackage{booktabs}       
\usepackage{array}
\usepackage[font=small]{caption}
\usepackage{color}
\usepackage{cite}

\usepackage{enumitem} 
\usepackage{float}
\usepackage{grffile} 
\usepackage{graphicx}
\usepackage{graphics}
\usepackage{microtype}      
\usepackage{multirow}
\usepackage{url}            
\usepackage[table]{xcolor} 
\usepackage{xparse}
\usepackage{colortbl}
\usepackage{threeparttable}
\usepackage{mhchem}
\usepackage{dsfont} 
\usepackage{subcaption}

\usepackage{hyperref}       

\newcommand{\fail}[1]{\textcolor{red}{#1}}

\newtheorem{theorem}{Theorem}

\newtheorem{corollary}{Corollary}
\theoremstyle{definition}

\newtheorem{remark}{Remark}

\usepackage{shortex}

\graphicspath{{images/}} 

\allowdisplaybreaks 

\begin{document}
	
	\title{Enhancing Diffusion-Based Quantitatively Controllable Image Generation via Matrix-Form EDM and Adaptive Vicinal Training}

	\author{Xin~Ding,~\IEEEmembership{Member,~IEEE,}
    		Yun Chen,
    		Sen Zhang,~\IEEEmembership{Member,~IEEE,}
    		Kao Zhang,
    		Nenglun Chen, 
    		Peibei Cao, \\
    		Yongwei Wang,~\IEEEmembership{Member,~IEEE,}
    		and Fei Wu,~\IEEEmembership{Senior Member,~IEEE}
	}
	\markboth{Journal of \LaTeX\ Class Files,~Vol.~14, No.~8, August~2021}%
	{Shell \MakeLowercase{\textit{et al.}}: A Sample Article Using IEEEtran.cls for IEEE Journals}
	
	
	\maketitle

	\begin{abstract}
		
        \textit{Continuous Conditional Diffusion Model} (CCDM) is a diffusion-based framework designed to generate high-quality images conditioned on continuous regression labels. Although CCDM has demonstrated clear advantages over prior approaches across a range of datasets, it still exhibits notable limitations and has recently been surpassed by a GAN-based method, namely CcGAN-AVAR. These limitations mainly arise from its reliance on an outdated diffusion framework and its low sampling efficiency due to long sampling trajectories. To address these issues, we propose an improved CCDM framework, termed iCCDM, which incorporates the more advanced \textit{Elucidated Diffusion Model} (EDM) framework with substantial modifications to improve both generation quality and sampling efficiency. Specifically, iCCDM introduces a novel matrix-form EDM formulation together with an adaptive vicinal training strategy. Extensive experiments on four benchmark datasets, spanning image resolutions from $64\times64$ to $256\times256$, demonstrate that iCCDM consistently outperforms existing methods, including state-of-the-art large-scale text-to-image diffusion models (e.g., Stable Diffusion 3, FLUX.1, and Qwen-Image), achieving higher generation quality while significantly reducing sampling cost.

	\end{abstract}
	
	\begin{IEEEkeywords}
		Diffusion models, controllable image generation, continuous scalar conditions.
	\end{IEEEkeywords}
	
	\section{Introduction}

    \IEEEPARstart{C}{onditional} \textit{Diffusion Models} (CDMs) aim to learn the conditional probability distribution $p_\text{data}(\bm{x}|y)$ of high-dimensional data $\bm{x}$ (e.g., images) given auxiliary information $y$, thereby enabling explicit control over the generation process~\cite{croitoru2023diffusion, yang2023diffusion, ho2021classifier, 11249441, 11275890, 11249427}. Among existing CDMs, the most prominent examples are text-to-image diffusion models---such as Stable Diffusion (SD)~\cite{rombach2022high, esser2024scaling}, FLUX.1~\cite{labs2025flux1kontextflowmatching, flux2024, flux-2-2025}, and Qwen-Image~\cite{wu2025qwen}---which condition image synthesis on textual prompts. However, as shown in Figs.~\ref{fig:scatter_sfid_vs_speed_sa64} and~\ref{fig:example_imgs_real_and_fake}, these models encounter fundamental challenges when applied to \textbf{continuous quantitative variables} (i.e., \textbf{regression labels}), such as angles, ages, and temperatures. This setting, commonly referred to as \textit{Continuous Conditional Generative Modeling} (CCGM), arises in a wide range of downstream applications, including engineering inverse design~\cite{heyrani2021pcdgan, zhao2024ccdpm, fang2023diverse}, hyperspectral image processing~\cite{zhu2023data}, point cloud synthesis~\cite{triess2022point}, remote sensing image analysis~\cite{giry2022sar,shi2025regression}, model compression~\cite{ding2023distilling}, multimode sensing~\cite{chen2025hybrid}, and beyond.
    
    \begin{figure}[!htp] 
    	\centering
    	\includegraphics[width=0.9\linewidth]{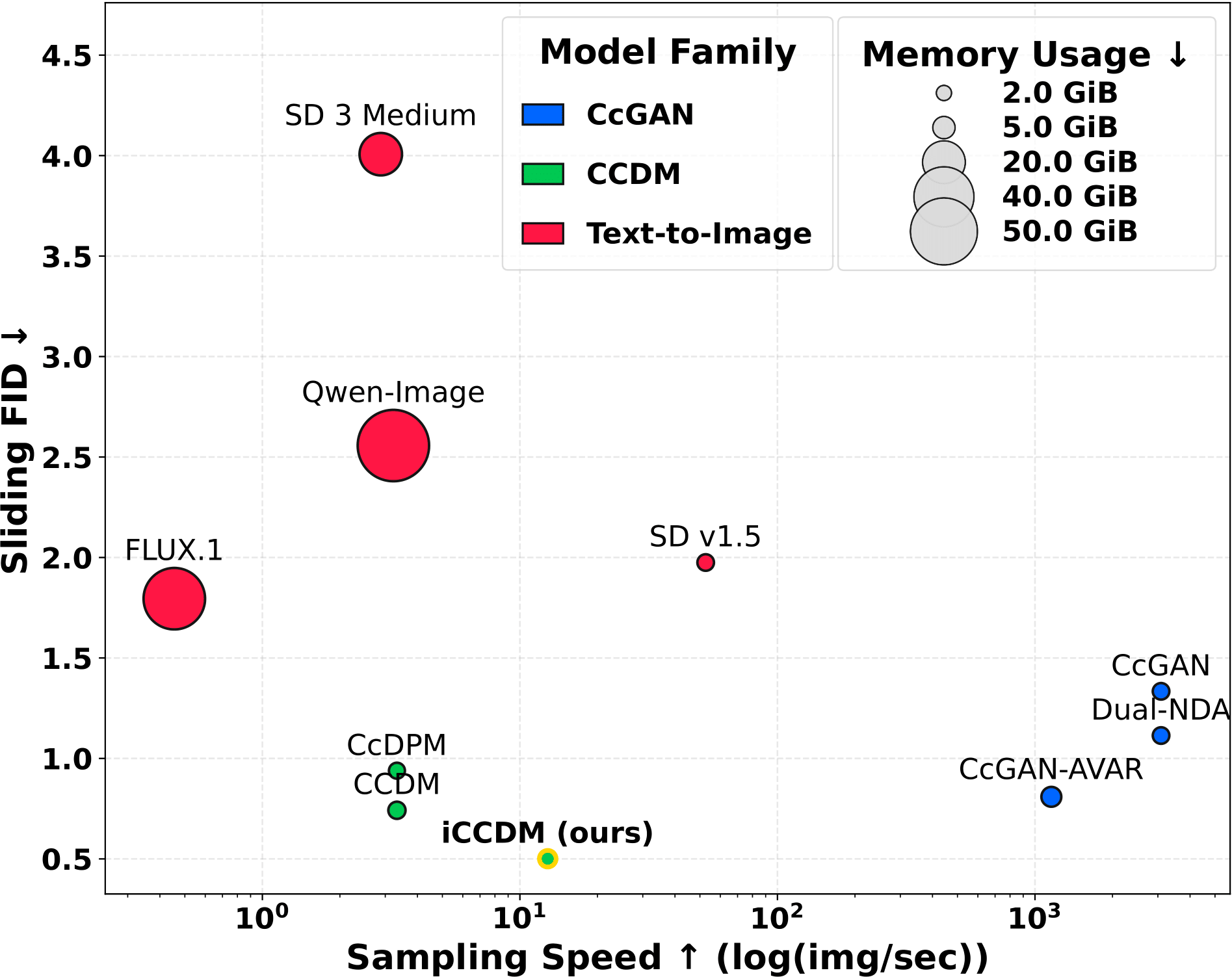}
    	\caption{ Comparison of Sliding FID versus Sampling Speed Across Three Model Families on the Steering Angle Dataset ($64\times64$). The size of each scatter point represents GPU memory usage during sampling. Arrows ($\downarrow$ or $\uparrow$) indicate whether lower or higher values are preferred. Text-to-image diffusion models are fine-tuned from officially released checkpoints using either full fine-tuning or LoRA~\cite{hu2022lora}, while all other methods are trained from scratch. }
    	\label{fig:scatter_sfid_vs_speed_sa64}
    \end{figure}
    
    To address these challenges, we previously proposed the \textit{Continuous Conditional Diffusion Model} (CCDM)~\cite{ding2025ccdm}, a diffusion-based framework specifically designed for CCGM tasks involving image data. Built upon the classical \textit{Denoising Diffusion Probabilistic Model} (DDPM) framework~\cite{ho2020denoising}, CCDM introduces a $y$-dependent diffusion process to better estimate the conditional distribution $p(\bm{x}| y)$ with regression labels. In addition, CCDM incorporates the vicinal training strategy originally developed for \textit{Continuous Conditional Generative Adversarial Networks} (CcGANs)~\cite{ding2021ccgan, ding2023ccgan}. Extensive empirical results have demonstrated the superiority of CCDM over prior approaches across a variety of benchmark datasets.

    Despite its effectiveness, CCDM exhibits several notable limitations. First, its performance has recently been surpassed by CcGAN-AVAR~\cite{ding2025imbalance}, a GAN-based method. Specifically, CCDM often suffers from severe label inconsistency---where generated samples fail to match the given regression labels---on challenging datasets such as Steering Angle ($256 \times 256$). These issues may be attributed to inherent constraints within the DDPM framework~\cite{ho2020denoising}, the standard UNet architecture~\cite{ronneberger2015u}, and the fixed-width vicinity mechanism~\cite{ding2023ccgan}. Second, CCDM suffers from low sampling efficiency, typically requiring long trajectories (often exceeding 150 steps) to achieve satisfactory generation quality. Third, to construct $y$-dependent diffusion processes, CCDM relies on a large pre-trained covariance embedding network implemented as a multi-layer perceptron (MLP), which maps a scalar regression label $y$ to a high-dimensional embedding vector $\bm{h}^l(y)$ matching the dimensionality of $\bm{x}$. This design incurs substantial memory overhead during sampling.

	Motivated by these observations, we propose the \textit{improved Continuous Conditional Diffusion Model} (iCCDM), which extends the \textit{Elucidated Diffusion Model} (EDM)~\cite{karras2022elucidating}---one of the most influential modern diffusion frameworks---to the CCGM setting through a series of \textbf{in-depth modifications}. Our main contributions are summarized as follows:
    \begin{itemize}
        \item We extend the EDM framework for CCGM by explicitly incorporating conditioning information into the noise perturbation process, enabling condition-aware diffusion.
        
        \item We formulate matrix-form forward and reverse \textit{Stochastic Differential Equations} (SDE) and the corresponding \textit{Probability Flow Ordinary Differential Equation} (PF-ODE), and introduce a non-negative weighting coefficient $\lambda_y$ to flexibly modulate the influence of the condition---an ability absent from the original CCDM.
        
        \item We propose a novel vicinal score estimate by refining adaptive vicinal techniques, and adapt the EDM preconditioning, network architecture, training, and sampling procedures to the matrix-form setting.
        
        \item We improve efficiency by replacing the covariance embedding network of CCDM with a lightweight five-layer Convolutional Neural Network (CNN), significantly reducing model parameters during training and sampling.
        
        \item Extensive experiments on four benchmark datasets at resolutions ranging from $64\times64$ to $256\times256$ demonstrate that iCCDM achieves state-of-the-art performance.
    \end{itemize}

	\section{Preliminary}
	
	\subsection{Continuous Conditional Generative Modeling}\label{sec:related_ccgm}

    \textit{Continuous Conditional Generative Adversarial Networks} (CcGANs)~\cite{ding2021ccgan, ding2023ccgan, ding2024turning}, introduced by Ding \emph{et al.}, represent a pioneering effort to address CCGM tasks within the GAN framework. To mitigate data scarcity inherent in CCGM settings, CcGANs employ vicinal discriminator losses, leveraging samples within a fixed hard or soft vicinity of a target label $y$ to estimate the conditional distribution $p(\bm{x}\mid y)$. Furthermore, CcGANs utilize an \textit{Improved Label Input} (ILI) mechanism, which maps the scalar label $y$ to a $128\times1$ embedding vector $\bm{h}_y^s$ before feeding it into the neural networks.

    As an alternative to CcGANs, Ding \emph{et al.}~\cite{ding2025ccdm} later introduced the \textit{Continuous Conditional Diffusion Model} (CCDM), the first diffusion-based framework for CCGM tasks. Built upon the DDPM architecture, CCDM derives $y$-dependent diffusion processes and integrates a hard vicinal image denoising loss. While extensive experiments demonstrate that CCDM yields superior generation quality compared to CcGANs~\cite{ding2021ccgan, ding2023ccgan, ding2024turning}, it has notable limitations. As shown in Section~\ref{sec:experiment}, CCDM suffers from significant label inconsistency on challenging datasets, such as Steering Angle ($256\times256$). Additionally, its long sampling trajectories result in low sampling efficiency. Although distillation-based sampling~\cite{yin2024improved} can improve sampling speed, it comes at the cost of reduced image quality, as noted in~\cite{ding2025ccdm}. Moreover, the reliance on a heavyweight multi-layer perceptron to construct $y$-dependent diffusion processes incurs substantial memory overhead during sampling.
    
    Most recently, Ding \emph{et al.} introduced CcGAN-AVAR~\cite{ding2025imbalance}, a comprehensive reformulation of the original CcGAN framework featuring a refined network architecture, novel adaptive vicinity mechanisms, and auxiliary GAN training regularization. In contrast to the fixed vicinity of vanilla CcGANs, CcGAN-AVAR employs \textit{Adaptive Vicinity} (AV) strategies---specifically Soft AV and Hybrid AV---which dynamically adjust weight decay rates according to local sample density. These adaptive mechanisms significantly enhance label consistency, enabling CcGAN-AVAR to outperform both vanilla CcGANs and CCDM across various benchmark datasets.

	\subsection{Elucidated Diffusion Model}\label{sec:related_edm}
    
    The \textit{Elucidated Diffusion Model} (EDM), proposed by Karras \emph{et al.}~\cite{karras2022elucidating}, provides a principled reformulation of diffusion models that unifies DDPMs~\cite{ho2020denoising,nichol2021improved} and \textit{Score-based Generative Models} (SGMs)~\cite{song2021scorebased}. Unlike DDPMs, which rely on a discretized forward process with a fixed noise schedule, and SGMs, which are typically formulated via continuous-time score matching, EDM adopts a noise-conditioned parameterization that decouples training and sampling from any specific forward process. Noise is injected through simple additive Gaussian corruption with a continuously sampled noise level, enabling the model to learn denoising behavior across a wide range of noise scales. To support this, EDM employs a preconditioning strategy that rescales the network input, output, and skip connections as explicit functions of the noise level, ensuring stable signal magnitudes throughout the network and decoupling the optimization dynamics from the noise scale. Training is therefore carried out using a noise-weighted denoising objective that balances contributions across noise levels and improves training stability.

    From a stochastic process perspective, EDM corresponds to a variance-exploding forward SDE and its associated reverse-time SDE. Equivalently, sampling can be performed by solving the corresponding probability flow ODE~\cite{chen2023probability}, which shares the same marginal distributions as the SDE while yielding a deterministic trajectory. This formulation enables efficient, high-quality generation using higher order numerical solvers, substantially reducing the sampling steps required compared to DDPMs while maintaining or improving sample fidelity.

	\section{Method} \label{sec:method}
	
     To enhance both the generation performance and sampling efficiency of CCDM~\cite{ding2025ccdm}, we migrate from the DDPM framework to the more advanced EDM framework, proposing an improved CCDM formulation. Under this new paradigm, we reformulate the noise perturbation process, the forward and backward SDEs, the probability flow ODE, and the associated training and sampling procedures.
    
	\subsection{Perturbation Process with Condition-Specific Noise}\label{sec:noise_perturnation}
	
	Let $\bm{x}\in \mathbb{R}^d$ be a sample from the real data distribution conditioned on a regression label $y\in \mathbb{R}$, denoted as $p_\text{data}(\bm{x}|y)$, with a simplified covariance matrix $\bm{\Sigma}_\text{data}=\sigma_\text{data}^2\bm{I}$. While EDM~\cite{karras2022elucidating} employs a condition-agnostic noising process to perturb real data, we introduce a condition-specific forward perturbation to enable finer control of generated samples under regression labels. This process is defined as:
	\[
		\bm{x}_t = \bm{x}_0 + \bm{\Sigma}(t,y)^{\frac{1}{2}}\bm{\varepsilon},
		\label{eq:iccdm_noise_process}
	\]
	where $\bm{x}_0\in\mathbb{R}^d$ is a clean sample drawn from $p_\text{data}(\bm{x}|y)$, $\bm{\varepsilon}\in\mathbb{R}^d$ is Gaussian white noise sampled from $\mathcal{N}(\bm{0},\bm{I}_{d\times d})$, and $\bm{x}_t\in\mathbb{R}^d$ is the noisy sample at time $t\in [0,T]$. Here, $\bm{\Sigma}(t,y)^{\frac{1}{2}}\in\mathbb{R}^{d\times d}$ is a symmetric matrix square root satisfying 
	\[
		\bm{\Sigma}(t,y)^{\frac{1}{2}} \bm{\Sigma}(t,y)^{\frac{1}{2}} = \bm{\Sigma}(t,y).
		\label{eq:Sigma_square}
	\]
	The time-dependent, condition-specific noise covariance matrix $\bm{\Sigma}(t,y)\in\mathbb{R}^{d\times d}$ defines the transition distribution:
	\[
		p_{0t}(\bm{x}|\bm{x}_0,y) = \mathcal{N}(\bm{x}_0, \bm{\Sigma}(t,y)).
		\label{eq:transition_dist}
	\]
	The noisy data distribution conditional on $y$ at time $t$, denoted by $p_t(\bm{x}|y)$, can then be expressed as
	\[
		p_t(\bm{x}|y) & = \int p_{0t}(\bm{x}|\bm{x}_0,y) p_\text{data}(\bm{x}_0|y) \mathrm{d} \bm{x}_0 \\
		& = \int p_\text{data}(\bm{x}_0|y) \mathcal{N}(\bm{x} ; \bm{x}_0, \bm{\Sigma}(t,y)) \mathrm{d} \bm{x}_0 \\
		& = \left[ p_\text{data} \ast \mathcal{N}(\bm{x}_0, \bm{\Sigma}(t,y)) \right](\bm{x}|y) \\
		& \triangleq  p_{\bm{\Sigma}(t,y)}(\bm{x}|y),
		\label{eq:p_t}
	\]
	where $p \ast q$ denotes the convolution between two probability density functions $p$ and $q$. Following~\cite{karras2022elucidating}, we replace $p_t(\bm{x}|y)$ with $p_{\bm{\Sigma}}(\bm{x}|y)$, allowing direct control of the noise level by parameterizing it with $\bm{\Sigma}$ instead of $t$.
	
	
	\subsection{Condition-Specific and Matrix-Form SDEs and PF-ODE}\label{sec:sde_and_ode}

    In this section, we present the matrix-form SDEs and the corresponding PF-ODE for the condition-specific perturbation process described above.

	\subsubsection{Forward SDE}
	
	The perturbation process introduced in Eq.~\eqref{eq:iccdm_noise_process} can be expressed as a forward \textit{Stochastic Differential Equation} (SDE)~\cite{oksendal2013stochastic} formulated as
	\[
		\mathrm{d}\bm{X}_t = \dot{\bm{\Sigma}}(t,y)^{\frac{1}{2}}\mathrm{d}\bm{B}_t,
		\label{eq:iccdm_forward_sde}
	\]
	where $t$ is a continuous time variable, $\bm{B}_t$ is a standard Wiener process, $\bm{X}_0 \sim p_\text{data}(\bm{x}|y)$ is the initial state, $\bm{X}_t\in\mathbb{R}^d$ denotes the intermediate state, and $\dot{\bm{\Sigma}}(t,y)^{\frac{1}{2}}\in\mathbb{R}^{d\times d}$ is a condition-specific diffusion coefficient satisfying
	\[
		\dot{\bm{\Sigma}}(t,y)^{\frac{1}{2}}\dot{\bm{\Sigma}}(t,y)^{\frac{1}{2}} = \dot{\bm{\Sigma}}(t,y),
		\label{eq:Q_GG_Sigma_dot}
	\]
	where $\dot{\bm{\Sigma}}(t, y)$ represents the partial derivative of the noise covariance matrix $\bm{\Sigma}(t, y)$ with respect to $t$. Accordingly, $\dot{\bm{\Sigma}}(t, y)^{\frac{1}{2}}$ denotes the symmetric matrix square root of $\dot{\bm{\Sigma}}(t, y)$.
	
	\begin{theorem}
		\label{thm:noising_process}
		Given the forward diffusion process defined in Eq.~\eqref{eq:iccdm_forward_sde}, the conditional distribution of $\bm{X}_t$ given $\bm{X}_0 = \bm{x}_0$ is Gaussian:
		\[
		\bm{X}_t | \bm{X}_0 = \bm{x}_0 \sim \mathcal{N}(\bm{x}_0, \bm{\Sigma}(t,y)).
		\]
	\end{theorem}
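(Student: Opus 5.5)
The plan is to integrate the forward SDE in Eq.~\eqref{eq:iccdm_forward_sde} directly. Its drift is zero and its diffusion coefficient $\dot{\bm{\Sigma}}(t,y)^{\frac{1}{2}}$ is deterministic: it depends only on $t$ and the fixed label $y$, not on $\bm{X}_t$. So the strong solution is explicit:
\[
\bm{X}_t = \bm{x}_0 + \int_0^t \dot{\bm{\Sigma}}(s,y)^{\frac{1}{2}}\,\mathrm{d}\bm{B}_s .
\]
The task therefore reduces to identifying the law of a Wiener integral with a deterministic matrix-valued integrand.

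The argument has three steps.

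\textbf{Gaussianity.} The stochastic integral $\bm{I}_t=\int_0^t \bm{G}(s)\,\mathrm{d}\bm{B}_s$, with $\bm{G}(s)=\dot{\bm{\Sigma}}(s,y)^{\frac{1}{2}}$ deterministic and square-integrable, is a Gaussian vector. To see this, approximate it by Riemann--It\^o sums $\sum_k \bm{G}(s_k)(\bm{B}_{s_{k+1}}-\bm{B}_{s_k})$. Each sum is a linear map of independent Gaussian increments, hence Gaussian, and $L^2$ limits of Gaussian vectors are Gaussian. Equivalently, one can compute the characteristic function $\mathbb{E}[e^{i\bm{u}^\top \bm{I}_t}]=\exp(-\tfrac12 \bm{u}^\top(\int_0^t \bm{G}\bm{G}^\top \mathrm{d}s)\bm{u})$.

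\textbf{Mean.} The It\^o integral has zero expectation, so $\mathbb{E}[\bm{X}_t\mid\bm{X}_0=\bm{x}_0]=\bm{x}_0$.

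\textbf{Covariance.} By the multivariate It\^o isometry,
\[
\mathrm{Cov}(\bm{X}_t\mid \bm{x}_0)=\int_0^t \bm{G}(s)\bm{G}(s)^\top \mathrm{d}s .
\]
Since $\bm{G}(s)$ is the symmetric square root, Eq.~\eqref{eq:Q_GG_Sigma_dot} gives $\bm{G}\bm{G}^\top=\dot{\bm{\Sigma}}(s,y)$. The integral therefore equals $\bm{\Sigma}(t,y)-\bm{\Sigma}(0,y)$ by the fundamental theorem of calculus. Alternatively, one could derive the covariance ODE $\frac{\mathrm{d}}{\mathrm{d}t}\mathrm{Cov}=\dot{\bm{\Sigma}}$ from It\^o's formula applied to $(\bm{X}_t-\bm{x}_0)(\bm{X}_t-\bm{x}_0)^\top$, which gives the same result.

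The main obstacle is not the stochastic calculus but the implicit hypotheses the statement needs. The conclusion requires $\bm{\Sigma}(0,y)=\bm{0}$, the natural convention consistent with Eq.~\eqref{eq:iccdm_noise_process}, where $\bm{x}_t=\bm{x}_0$ at $t=0$. I will state this boundary condition explicitly. The proof also needs $\dot{\bm{\Sigma}}(t,y)$ to be symmetric positive semidefinite for every $t$, so that the symmetric square root exists, and integrable on $[0,T]$, so that the It\^o integral is well defined. I will state these as standing assumptions, noting that they hold for the paper's intended choices of a nondecreasing noise schedule. With them in place, the three steps combine to give $\bm{X}_t\mid\bm{X}_0=\bm{x}_0\sim\mathcal{N}(\bm{x}_0,\bm{\Sigma}(t,y))$, matching the transition kernel in Eq.~\eqref{eq:transition_dist}.
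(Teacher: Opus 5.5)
Your proposal is correct and is essentially the paper's argument: the paper proves the linear-drift generalization (Theorem~\ref{thm:noising_process_general}) via the integrating factor $\Phi(t)=\exp(-\int_0^t f(s)\,\mathrm{d}s)$, and setting $f\equiv 0$ reduces it to your explicit solution $\bm{X}_t=\bm{x}_0+\int_0^t \bm{G}(s,y)\,\mathrm{d}\bm{B}_s$, whose deterministic-integrand It\^o integral the paper then takes to be Gaussian with mean $\bm{0}$ and covariance $\int_0^t \bm{G}\bm{G}^\intercal\,\mathrm{d}s$. Your boundary condition $\bm{\Sigma}(0,y)=\bm{0}$ is the convention the paper adopts in Section~\ref{sec:diffusion_coefficient} and builds into its covariance formula, and your justifications of Gaussianity and of the It\^o isometry are more explicit than the paper's, which simply asserts both.
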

	
	\begin{remark}
		The transition distribution defined in Eq.~\eqref{eq:transition_dist} for the perturbation process follows directly from Theorem~\ref{thm:noising_process}. The proof of Theorem~\ref{thm:noising_process} is provided in Appendix~C. 
	\end{remark}

	\subsubsection{Reverse SDE}
	
	According to Anderson's time-reversal theorem \cite{anderson1982reverse} and Song's derivation~\cite{song2021scorebased}, the reverse-time process can be formulated as another SDE:
	\[
		\mathrm{d}\bm{X}_t = &- \dot{\bm{\Sigma}}(t,y) \nabla_{\bm{x}}\log p_{\bm{\Sigma}(t,y)}(\bm{X}_t|y) \mathrm{d}t \\
		& + \dot{\bm{\Sigma}}(t,y)^{\frac{1}{2}}\mathrm{d}\bar{\bm{B}_t},
		\label{eq:iccdm_reverse_sde}
	\]
	where $\bar{\bm{B}}_t$ denotes a standard Wiener process evolving in reverse time, and $\nabla_{\tilde{\bm{x}}}\log p_{\bm{\Sigma}}(\tilde{\bm{x}}|y)$ is the \textit{conditional score function}---a vector field that points in the direction of increasing data density given the noise level $\bm{\Sigma}$ and condition $y$.
	

	\subsubsection{Probability Flow ODE}
	
	Building on the SDEs introduced above, and following the derivation of Song \emph{et al.}.~\cite{song2021scorebased}, we can remove the stochastic term to obtain the corresponding matrix-form \textit{Probability Flow ODE} (PF-ODE):
	\[
		\mathrm{d}\bm{X}_t & = - \frac{1}{2}\dot{\bm{\Sigma}}(t,y)\nabla_{\bm{x}}\log p_{\bm{\Sigma}(t,y)}(\bm{X}_t|y)\mathrm{d}t.
		\label{eq:iccdm_pf_ode}
	\]

	\subsection{Design of Diffusion Coefficient and Related Matrices}\label{sec:diffusion_coefficient}
	
	The noise perturbation (Eq.~\eqref{eq:iccdm_noise_process}), condition-specific SDEs (Eqs.~\eqref{eq:iccdm_forward_sde} and \eqref{eq:iccdm_reverse_sde}), and PF-ODE (Eq.~\eqref{eq:iccdm_pf_ode}) depend on four unspecified matrices: $\dot{\bm{\Sigma}}(t,y)^{\frac{1}{2}}$, $\dot{\bm{\Sigma}}(t,y)$, $\bm{\Sigma}(t,y)$, and $\bm{\Sigma}(t,y)^{\frac{1}{2}}$. 
	
	We define $\bm{G}(t,y)\triangleq\dot{\bm{\Sigma}}(t,y)^{\frac{1}{2}}$ as the diffusion coefficient in Eq.~\eqref{eq:iccdm_forward_sde}. We assume that $\bm{G}(t,y)$ takes a diagonal form:
	\[
	\bm{G}(t,y) = \text{Diag}\left( \left[g_i(t,y)\right]^d_{i=1} \right),
	\label{eq:diff_coeff_S}
	\]
	where $g_i(t,y)\in\mathbb{R}^+$, and $\text{Diag}(\cdot)$ constructs a diagonal matrix from a given vector. 
 
    Let $\bm{h}^l(y)=[h^l_1(y),\cdots,h^l_d(y)]^\intercal\in\mathbb{R}^d$ denote a \textbf{long positive embedding vector}, in contrast to the short embedding $\bm{h}^s(y)\in\mathbb{R}^{128}$ used in the ILI mechanism~\cite{ding2023ccgan}, where $d\gg 128$. This vector is produced by a pre-trained \textbf{covariance embedding network} $\phi(y)$ \cite{ding2025ccdm}. We transform $\bm{h}^l(y)$ into another positive embedding $\tilde{\bm{h}}^l(y)$ via the following element-wise mapping:
	\[
	\tilde{h}^l_i(y) = \exp(-h^l_i(y))\in (0,1].
	\label{eq:tilde_h_y}
	\]
	Using this embedding, we define 
	\[
		g_i(t,y)=\sqrt{2\dot{\sigma}(t)\sigma(t) + \lambda_y \tilde{h}^l_i(y)\dot{\sigma}(t) },
		\label{eq:g_i}
	\]
	where $\sigma(t)\in\mathbb{R}^+$ is the scalar noise level in the original EDM framework~\cite{karras2022elucidating}, $\dot{\sigma}(t)$ denotes its time derivative, and \textbf{$\lambda_y\geq 0$ is a non-negative weighting coefficient that controls the strength of the condition $y$ in the diffusion process}.
 
    Substituting Eq.~\eqref{eq:g_i} into Eq.~\eqref{eq:diff_coeff_S} yields
	\[
	\bm{G}(t,y) = \text{Diag}\left( \left[\sqrt{2\dot{\sigma}(t)\sigma(t) + \lambda_y \tilde{h}^l_i(y)\dot{\sigma}(t) } \right]^d_{i=1} \right).
	\label{eq:diff_coeff_S_2}
	\]
    Given this formulation, the remaining matrices can be derived sequentially as follows:
    \[
        \dot{\bm{\Sigma}}(t,y) & = \bm{G}(t,y)\bm{G}(t,y) \\
        & = \text{Diag}\left( \left[2\dot{\sigma}(t)\sigma(t) + \lambda_y \tilde{h}^l_i(y)\dot{\sigma}(t) \right]^d_{i=1} \right), \\
        \bm{\Sigma}(t,y) & = \bm{\Sigma}(0,y)+\int_0^t\bm{G}(s,y)\bm{G}(s,y)\mathrm{d}s \\
        & = \text{Diag}\left(\left[ \sigma(t)^2 + \lambda_y \tilde{h}^l_i(y)\sigma(t) \right]^d_{i=1} \right),\label{eq:Sigma} \\
        \bm{\Sigma}(t,y)^{\frac{1}{2}} & = \text{Diag}\left(\left[\sqrt{ \sigma(t)^2 + \lambda_y \tilde{h}^l_i(y)\sigma(t) } \right]^d_{i=1} \right), 
    \]
    where the initial value $\bm{\Sigma}(0,y)$ is set to $\bm{0}$. It is easy to verify that all four matrices are symmetric and positive definite.
    
	
	\begin{remark}
        \label{rmk:sigma_t}
        Following \cite{karras2022elucidating}, $\sigma$ is treated as an independent variable and sampled from a log-normal distribution, $\log(\sigma) \sim \mathcal{N}(P_{\text{mean}}, P_{\text{std}}^{2})$ in training. During sampling, we set $\sigma(t) = t$.
     \end{remark}

     \begin{remark}
        \label{rmk:lambda_y}

        Unlike \cite{ding2025ccdm}, which directly defines the diagonal entries of the noise covariance matrix $\bm{\Sigma}(t,y)$ as the embedding vector $\bm{h}^l(y)$, we introduce a weighting coefficient $\lambda_y$ that enables flexible control over the strength of the conditioning variable $y$ in the diffusion process across different datasets.

        Setting $\lambda_y = 0$ and removing the vicinal weights defined in Section~\ref{sec:training_process} reduces the proposed matrix-form EDM framework to the original EDM formulation.
        
        
     \end{remark}
	
	\subsection{Vicinal Training and Network Design}\label{sec:training_process}
	
	The reverse SDE and PF-ODE defined in Eqs.~\eqref{eq:iccdm_reverse_sde} and \eqref{eq:iccdm_pf_ode}, respectively, enable sampling from the data distribution $p_\text{data}(\bm{x}|y)$, provided that the conditional score function $\nabla_{\tilde{\bm{x}}}\log p_{\bm{\Sigma}}(\tilde{\bm{x}}|y)$, or a sufficiently accurate approximation thereof, is available. In this section, we present an estimate for the conditional score function, together with a corresponding training procedure that incorporates matrix conditioning~\cite{karras2022elucidating} and adaptive vicinity~\cite{ding2025imbalance}. We further discuss the network architecture design of both the score estimate and the covariance embedding network.
    
	\subsubsection{Vicinal Score Estimate}\label{sec:estimate_score}

    Following~\cite{karras2022elucidating}, we assume the existence of a denoiser function $D_{\bm{\theta}}(\tilde{\bm{x}},y,\bm{\Sigma})$, which takes as input a noisy sample $\tilde{\bm{x}}$, its corresponding label $y$, and the noise level $\bm{\Sigma}$, and outputs a denoised estimate $\hat{\bm{x}}$. Here, $\bm{\theta}$ denotes the learnable parameters of $D$. A common objective for training $D$, given fixed $\bm{\Sigma}$ and $y$, is the $L_2$ loss defined as
	\[
		& \mathcal{L}(D;y,\bm{\Sigma}) \\
		= &\mathbb{E}_{\bm{X}\sim p_\text{data}(\bm{x}|y), \bm{\varepsilon}\sim\mathcal{N}(\bm{0},\bm{\Sigma})}\left[ \left\| D_{\bm{\theta}}(\tilde{\bm{X}},y,\bm{\Sigma}) - \bm{X} \right\|_2^2  \right],
		\label{eq:L_D_fixed_y_Sigma}
	\]
	where the noisy sample $\tilde{\bm{X}} = \bm{X} + \bm{\varepsilon}\sim p_{\bm{\Sigma}}(\tilde{\bm{x}}|y)$ is a random variable, as are $\bm{X}$ and $\bm{\varepsilon}$.
	As shown below, the denoiser $D$ can be used to approximate the score function $\nabla_{\tilde{\bm{x}}}\log p_{\bm{\Sigma}}(\tilde{\bm{x}}|y)$.
	
	As an initial step in linking the denoiser to the score function, Karras et~al.~\cite{karras2022elucidating} approximate $p_\text{data}(\bm{x}|y)$ using a finite set of samples via the empirical probability density defined as
	\[
		\hat{p}^\delta_\text{data}(\bm{x}|y) = \frac{1}{N^y}\sum_{i=1}^{N^y}\delta(\bm{x}-\bm{x}_{0,i}^y),
		\label{eq:p_data_emp_est}
	\]
	where $y$ is the target conditioning regression label, $\bm{x}_{0,i}^y$ denotes the $i$-th clean sample in the training set associated with label $y$, $N^y$ is the number of training samples corresponding to $y$, and $\delta(\cdot)$ is the Dirac delta function. However, as pointed out by Ding et~al.~\cite{ding2023ccgan}, Eq.~\eqref{eq:p_data_emp_est} provides a poor approximation of the conditional data distribution $p_\text{data}(\bm{x}|y)$. The key issue lies in the sparsity of regression datasets, where the number of samples corresponding to a specific label value is often very small---or even zero---rendering $\hat{p}^\delta_\text{data}(\bm{x}|y)$ ineffective. 
	
	To overcome this limitation, we define a vicinal estimate to approximate $p_\text{data}(\bm{x}|y)$:
	\[
		\hat{p}^v_\text{data}(\bm{x}|y) = C_y\sum_{i=1}^{N}W_{i,y}\delta(\bm{x}-\bm{x}_{0,i}),
		\label{eq:p_data_vic_est}
	\]
	where $\bm{x}_{0,i}$ is the $i$-th clean sample in the training set (whose label may not exactly equal $y$), and $W_{i,y}$ denotes the \textit{vicinal weight} associated with $\bm{x}_{0,i}$. The weight $W_{i,y}$ can be defined using either fixed-size vicinal weights~\cite{ding2023ccgan} or adaptive vicinal weights~\cite{ding2025imbalance}.
	The normalization constant $C_y$ ensures that $\hat{p}^v_\text{data}(\bm{x}|y)$ forms a valid probability density function. This vicinal estimate enables us to uses samples with labels falling within a vicinity of $y$ to better estimate $p_\text{data}(\bm{x}|y)$.
	
	Next, we reorganize the denoiser function’s loss in Eq.~\eqref{eq:L_D_fixed_y_Sigma} and replace $p_\text{data}$ with $\hat{p}^v_\text{data}$:
	\begingroup
	\[
		& \mathcal{L}(D;y,\bm{\Sigma}) \\
		= &\mathbb{E}_{\bm{X}\sim p_\text{data}(\bm{x}|y)}\mathbb{E}_{\tilde{\bm{X}}\sim\mathcal{N}(\bm{X},\bm{\Sigma})}\left[ \left\| D(\tilde{\bm{X}},y,\bm{\Sigma}) - \bm{X} \right\|_2^2  \right]\\
		\approx & C_y \int \underbrace{\left[ \sum_{i=1}^{N}  W_{i,y} \mathcal{N}(\tilde{\bm{x}}; \bm{x}_{0,i}, \bm{\Sigma}) \left\| D(\tilde{\bm{x}},y,\bm{\Sigma}) - \bm{x}_{0,i} \right\|_2^2 \right]}_{\triangleq \hat{\mathcal{L}}^v(D; \tilde{\bm{x}}, y,\bm{\Sigma})} \mathrm{d} \tilde{\bm{x}} \\
		\triangleq & \hat{\mathcal{L}}^v(D;y,\bm{\Sigma})	\label{eq:vicinal_denoiser_loss}	
	\]
	\endgroup
	Following~\cite{karras2022elucidating}, minimizing $\hat{\mathcal{L}}^v(D;y,\bm{\Sigma})$ in Eq.~\eqref{eq:vicinal_denoiser_loss} reduces to independently minimizing $\hat{\mathcal{L}}^v(D; \tilde{\bm{x}}, y,\bm{\Sigma})$ for each $\tilde{\bm{x}}$:
	\[
		D^\ast(\tilde{\bm{x}},y,\bm{\Sigma}) = \arg\min_{D}\hat{\mathcal{L}}^v(D; \tilde{\bm{x}}, y,\bm{\Sigma}),
		\label{eq:minimization_D_ast}
	\]
	where $D^\ast$ is the optimal denoiser function that minimizes $\hat{\mathcal{L}}^v(D; \tilde{\bm{x}}, y,\bm{\Sigma})$ and admits the closed-form solution
    \[
        D^\ast(\tilde{\bm{x}},y,\bm{\Sigma}) = \frac{\sum_{i=1}^{N} W_{i,y}\mathcal{N}(\tilde{\bm{x}}; \bm{x}_{0,i}, \bm{\Sigma})\bm{x}_{0,i}}{\sum_{i=1}^{N} W_{i,y}\mathcal{N}(\tilde{\bm{x}}; \bm{x}_{0,i}, \bm{\Sigma})}.
		\label{eq:D_star}
    \]

	In Eq.~\eqref{eq:p_t}, we have shown that the noisy data distribution $p_{\bm{\Sigma}}(\tilde{\bm{x}}|y)$ can be expressed as the convolution of $p_\text{data}(\bm{x}|y)$ and a Gaussian $\mathcal{N}(\bm{x}_0, \bm{\Sigma})$. By substituting Eq.~\eqref{eq:p_data_vic_est} into Eq.~\eqref{eq:p_t}, we can obtain an estimate of $p_{\bm{\Sigma}}(\tilde{\bm{x}}|y)$:
	\[
	p_{\bm{\Sigma}}(\tilde{\bm{x}}|y) = & \left[ p_\text{data} \ast \mathcal{N}(\bm{x}_0, \bm{\Sigma}) \right](\tilde{\bm{x}}|y) \\
	\approx & \int \hat{p}^v_\text{data}(\bm{x}_0|y) \mathcal{N}(\tilde{\bm{x}} ; \bm{x}_0, \bm{\Sigma}) \mathrm{d} \bm{x}_0 \\
	= & C_y\sum_{i=1}^{N} W_{i,y} \mathcal{N}(\tilde{\bm{x}} ; \bm{x}_{0,i}, \bm{\Sigma})  \triangleq \hat{p}^v_{\bm{\Sigma}}(\tilde{\bm{x}}|y). \label{eq:p_sigma_vic_est}
	\]
	With Eq.~\eqref{eq:p_sigma_vic_est}, the gradient of $\log \hat{p}^v_{\bm{\Sigma}}(\tilde{\bm{x}}|y)$ with respect to $\tilde{\bm{x}}$ provides an estimate of $\nabla_{\tilde{\bm{x}}}\log p_{\bm{\Sigma}}(\tilde{\bm{x}}|y)$ and takes the form
	\[
	& \nabla_{\tilde{\bm{x}}}\log\hat{p}^v_{\bm{\Sigma}}(\tilde{\bm{x}}|y) \\
	= & \bm{\Sigma}^{-1}\left( \frac{\sum_{i=1}^{N} W_{i,y} \mathcal{N}(\tilde{\bm{x}} ; \bm{x}_{0,i}, \bm{\Sigma})\bm{x}_{0,i}}{\sum_{i=1}^{N} W_{i,y} \mathcal{N}(\tilde{\bm{x}} ; \bm{x}_{0,i}, \bm{\Sigma})} - \tilde{\bm{x}}  \right).
	\label{eq:grad_p_sigma_vic_est}
	\]
	By substituting Eq.~\eqref{eq:D_star} into Eq.~\eqref{eq:grad_p_sigma_vic_est}, we obtain an explicit relationship between the score function and the denoiser:
	\[
		\nabla_{\tilde{\bm{x}}}\log\hat{p}^v_{\bm{\Sigma}}(\tilde{\bm{x}}|y) & = \bm{\Sigma}^{-1}\left( D^\ast(\tilde{\bm{x}},y,\bm{\Sigma}) - \tilde{\bm{x}} \right) \\
		& \approx\bm{\Sigma}^{-1}\left( D_{\bm{\theta}}(\tilde{\bm{x}},y,\bm{\Sigma}) - \tilde{\bm{x}} \right).
		\label{eq:score_by_D}
	\]
	Eq.~\eqref{eq:score_by_D} thus provides a vicinal estimate of the conditional score function $\nabla_{\tilde{\bm{x}}}\log p_{\bm{\Sigma}}(\tilde{\bm{x}}|y)$ expressed in terms of the learned denoiser $D_{\bm{\theta}}$.

    \begin{remark}

        The vicinal weight $W_{i,y}$ for $\bm{x}_{0,i}$ in Eq.~\eqref{eq:p_data_vic_est} is defined as an indicator function: $W_{i,y} = \mathds{1}_{\{|y-y_i|\leq\kappa_{y}\}}$, where $y_i$ denotes the regression label associated with $\bm{x}_{0,i}$, and $\kappa_{y}$ represents the radius of the closed interval $[y-\kappa_{y}, y+\kappa_{y}]$ centered at $y$. The radius $\kappa_{y}$ is computed using Algorithm~1 in~\cite{ding2025imbalance}, and it is dynamically adjusted based on the local sample count $N^y$ at $y$ and a hyperparameter $N_\text{AV}$ that specifies the minimum effective number of samples per vicinity. This interval, centered at $y$, is referred to as the \textit{Hard Adaptive Vicinity} (Hard AV). In this work, we adopt the Hard AV approach rather than the \textit{Soft or Hybrid Adaptive Vicinities} (Soft/Hybrid AV) proposed in~\cite{ding2025imbalance}, as Hard AV is more robust in our experiments, particularly in challenging settings such as Steering Angle, as shown in Table~\ref{tab:ab_effect_vicinity_type}.
         
    \end{remark}

	\subsubsection{Matrix Preconditioning}\label{sec:matrix_preconditioning}
    
    We apply preconditioning to the denoiser $D_{\bm{\theta}}$ as
	\[
		D_{\bm{\theta}}(\tilde{\bm{x}},y,\bm{\Sigma}) = C_\text{skip}^{\bm{\Sigma}}\tilde{\bm{x}} + C_\text{out}^{\bm{\Sigma}} F_{\bm{\theta}}(C_\text{in}^{\bm{\Sigma}} \tilde{\bm{x}}, y, C_\text{noise}^{\bm{\Sigma}}),
		\label{eq:preconditioning}
	\]
	where $F_{\bm{\theta}}$ denotes the neural network to be trained. The preconditioning coefficients of the EDM framework are extended to the matrix-form setting as
	\[
	& C_\text{in}^{\bm{\Sigma}} = (\sigma_\text{data}^2\bm{I}+\bm{\Sigma})^{-\frac{1}{2}}, \quad C_\text{skip}^{\bm{\Sigma}} = \sigma_\text{data}^2(\sigma_\text{data}^2\bm{I}+\bm{\Sigma})^{-1}, \\
	& C_\text{out}^{\bm{\Sigma}} = \sigma_\text{data}\bm{\Sigma}^{\frac{1}{2}}(\sigma_\text{data}^2\bm{I}+\bm{\Sigma})^{-\frac{1}{2}}, \quad C_\text{noise}^{\bm{\Sigma}} = \frac{1}{4}\log(\bm{\Sigma}).
     \label{eq:preconditioning}
	\]

	\subsubsection{Vicinal Training Loss}\label{sec:overall_loss}
    
    Before deriving the final training objective for $D_{\bm{\theta}}$, we define a vicinal estimate of the joint data distribution $p_\text{data}(\bm{x},y)$ as
	\begingroup
	\[
		\hat{p}^v_\text{data}(\bm{x},y) & \triangleq \hat{p}^\text{KDE}_\text{data}(y) \cdot \hat{p}^v_\text{data}(\bm{x}|y) \\
		= & \left[ \frac{1}{N}\sum_{i=1}^N\exp\left( -\frac{(y-y_i)^2}{2\sigma_\text{KDE}^2} \right) \right] \\
		& \cdot \left[ C_y\sum_{i=1}^{N}W_{i,y}\delta(\bm{x}-\bm{x}_{0,i})  \right],
        \label{eq:joint_estimate_w_kde_vicinal}
	\]
	\endgroup
	where $\hat{p}^\text{KDE}_\text{data}$ denotes the \textit{Kernel Density Estimate} (KDE) \cite{silverman1986density} of the regression label distribution, and $\sigma_\text{KDE}^2$ is the kernel bandwidth parameter.

    Based on Eq.~\eqref{eq:joint_estimate_w_kde_vicinal}, the denoising loss for $D_{\bm{\theta}}$ under a fixed covariance matrix $\bm{\Sigma}$ is defined as
	\begingroup
	\[
		&\hat{\mathcal{L}}(\bm{\theta};\bm{\Sigma}) \\
		= & \mathbb{E}_{\substack{(\bm{X},Y)\sim \hat{p}^v_\text{data}(\bm{x},y) \\ \bm{\varepsilon}\sim\mathcal{N}(\bm{0},\bm{\Sigma})}}\left[ \left\| \Lambda_{\bm{\Sigma}}^{\frac{1}{2}} \left( D_{\bm{\theta}}(\bm{X}+\bm{\varepsilon},y,\bm{\Sigma}) - \bm{X} \right) \right\|_2^2 \right] \\
		= & \frac{C}{N}\sum_{i=1}^N\sum_{j=1}^N \mathbb{E}_{\substack{ \bm{\varepsilon}\sim\mathcal{N}(\bm{0},\bm{\Sigma}) \\ \eta\sim\mathcal{N}(0,\sigma_\text{KDE}^2) }} W_{i,y_j+\eta}\left\| \Lambda_{\bm{\Sigma}}^{\frac{1}{2}} \right. \\
		& \qquad\qquad\qquad  \left. \vphantom{\Lambda_{\bm{\Sigma}}^{\frac{1}{2}}} \cdot \left( D_{\bm{\theta}}(\bm{x}_i+\bm{\varepsilon},y_j+\eta,\bm{\Sigma}) - \bm{x}_i \right) \right\|_2^2
		\label{eq:hat_L_theta_Sigma}
	\]
	\endgroup
	where $C$ absorbs all normalization constants, $\eta\triangleq y-y_j$, and the noise-weighting matrix is defined as
    \[
		\Lambda_{\bm{\Sigma}} = \text{Diag}\left( \left[ \frac{\Sigma_{ii}+\sigma_\text{data}^2}{\sigma_\text{data}^2\Sigma_{ii}} \right]_{i=1}^d \right),
          \label{eq:noise_weighting}
	\]
	where $\Sigma_{ii}$ denotes the $i$-th diagonal element of $\bm{\Sigma}$. 

    Finally, the overall training objective is obtained by taking the expectation of $\hat{\mathcal{L}}(\bm{\theta};\bm{\Sigma})$ over the distribution of noise levels:
	\[
		\hat{\mathcal{L}}(\bm{\theta}) = \mathbb{E}_{\log(\sigma) \sim \mathcal{N}(P_\text{mean}, P_\text{std}^2)}\hat{\mathcal{L}}(\bm{\theta};\bm{\Sigma}),
	\]
	where $\bm{\Sigma}$ is computed using Eq.~\eqref{eq:Sigma} with $\sigma(t)$ replaced by $\sigma$.
    
	\subsubsection{Network Design}\label{sec:network_design}
    
    In the EDM framework, Karras \emph{et al.}~\cite{karras2022elucidating} adopt the ADM-style UNet architecture~\cite{dhariwal2021diffusion}, with minor modifications, as the backbone for $F_{\bm{\theta}}$ in Eq.~\eqref{eq:preconditioning}. However, as shown in Table~\ref{tab:ab_effect_network}, the EDM UNet does not outperform the CCDM UNet~\cite{ding2025ccdm} on CCGM tasks. Therefore, we use the CCDM UNet as the backbone for iCCDM in most experiments. For the $256\times 256$ experiments, we propose adopting the vision transformer-based DiT backbone~\cite{peebles2023scalable} due to its superior performance.
    

    In addition, we replace the five-layer fully connected covariance embedding network $\phi(y)$ used in CCDM~\cite{ding2025ccdm} with a five-layer convolutional neural network. This modification substantially reduces the number of model parameters without sacrificing generation quality, as demonstrated in Table~\ref{tab:ab_effect_y2cov_network}.

	\subsection{Second-Order Sampling Methods in Matrix Form}\label{sec:sampling}

    For sample generation, we propose a deterministic sampling method based on Heun’s second-order scheme, as well as a stochastic sampling approach. These two methods are detailed in Algorithms~\ref{alg:deterministic_sampling} and \ref{alg:stochastic_sampling}, respectively, corresponding to an ODE solver and an SDE solver. Both algorithms inherit several hyperparameters from the original EDM sampling procedures~\cite{karras2022elucidating}, including $\sigma_\text{max}$, $S_\text{noise}$, $S_\text{churn}$, $S_\text{tmin}$, and $S_\text{tmax}$, which are tuned following the guidelines in~\cite{karras2022elucidating}.

    For both sampling methods, we adopt classifier-free guidance (CFG)~\cite{ho2021classifier} to incorporate the regression label 
    $y$ into the generation process. Specifically, each output of $D_{\bm{\theta}}$ in Algorithms~\ref{alg:deterministic_sampling} and \ref{alg:stochastic_sampling} is computed as a linear combination of its conditional and unconditional outputs, with a guidance scale $\gamma\in[1.2,2]$ to balance image diversity and label consistency. We refer the reader to Section~D of CCDM~\cite{ding2025ccdm} for additional details.
    
    In all experiments, the number of sampling steps for both solvers is fixed to 32. As shown in Table~\ref{tab:ab_effect_sampler}, increasing the number of sampling steps does not further improve sampling quality, which contrasts with the findings reported in EDM~\cite{karras2022elucidating}. Moreover, Table~\ref{tab:ab_effect_sampler} indicates that the SDE solver often outperforms the ODE solver---again differing from the conclusions drawn in the standard EDM setting.
     	
	\begin{algorithm}[!h]
		\caption{Deterministic conditional sampling using Heun's $2^\text{nd}$ order method.}
		\label{alg:deterministic_sampling} 
		\begin{algorithmic}[1]
			\Require Condition $y$, weight $\lambda_y$, denoiser $D_{\bm{\theta}}$, number of ODE solver iterations $N$, and corresponding time steps $0=t_0<t_1<\cdots<t_N=\sigma_\text{max}$
			\State Sample $\bm{x}_N\sim \mathcal{N}(\bm{0}, \bm{\Sigma}(t_N,y;\lambda_y))$ 
			\State $i\leftarrow N$
			\While{$i>0$}
				\State $\bm{r}_i \leftarrow (\bm{x}_i - D_{\bm{\theta}}(\bm{x}_i,y,\bm{\Sigma}(t_i,y;\lambda_y)))$
				
				\State $\bm{d}_i\leftarrow \frac{1}{2}\dot{\bm{\Sigma}}(t_i,y;\lambda_y)\bm{\Sigma}(t_i,y;\lambda_y)^{-1} \bm{r}_i $
		
				\State $\bm{x}_{i-1}\leftarrow \bm{x}_i + (t_{i-1}-t_i)\bm{d}_i$
				\If{$t_{i-1}\neq 0$}
					\State $\bm{r}_{i-1} \leftarrow (\bm{x}_{i-1} - D_{\bm{\theta}}(\bm{x}_{i-1},y,\bm{\Sigma}(t_{i-1},y;\lambda_y)))$
				
					\State $\bm{d}_i^\prime\leftarrow \frac{1}{2}\dot{\bm{\Sigma}}(t_{i-1},y;\lambda_y)\bm{\Sigma}(t_{i-1},y;\lambda_y)^{-1}\bm{r}_{i-1}$
				
					\State $\bm{x}_{i-1}\leftarrow \bm{x}_i + (t_{i-1}-t_i)(\frac{1}{2}\bm{d}_i+\frac{1}{2}\bm{d}_i^\prime)$
				\EndIf
				\State $i\leftarrow (i-1)$
			\EndWhile
			\State \Return $\bm{x}_0$
		\end{algorithmic}
	\end{algorithm}
	
	\begin{algorithm}[!h]
		\caption{EDM-style Stochastic Conditional Sampling.}
		\label{alg:stochastic_sampling} 
		\begin{algorithmic}[1]
			\Require Condition $y$, weight $\lambda_y$, denoiser $D_{\bm{\theta}}$, number of SDE solver iterations $N$, corresponding time steps $0=t_0<t_1<\cdots<t_N=\sigma_\text{max}$, and four hyperparameters including $S_\text{noise}$, $S_\text{churn}$, $S_\text{tmin}$, and $S_\text{tmax}$.
			\State Sample $\bm{x}_N\sim \mathcal{N}(\bm{0}, \bm{\Sigma}(t_N,y;\lambda_y))$ 
			\State $i\leftarrow N$
			\While{$i>0$}
				\State Sample $\bm{\varepsilon}_i\sim\mathcal{N}(\bm{0},S_\text{noise}^2\bm{I})$
				\State Add noise to reach a higher noise level $\hat{t}_i\leftarrow t_i+\gamma_i t_i$, where $\gamma_i = \min\left( \frac{S_\text{churn}}{N},\sqrt{2}-1 \right) \cdot\mathds{1}_{\{t_i\in[S_\text{tmin}, S_\text{tmax}]\}}$.
    
				
				\State $\hat{\bm{x}}_i\leftarrow \bm{x}_i + \left( \bm{\Sigma}(\hat{t}_i,y;\lambda_y) - \bm{\Sigma}(t_i,y;\lambda_y) \right)^{\frac{1}{2}}\bm{\varepsilon}_i$
				
				\State $\bm{r}_i \leftarrow (\hat{\bm{x}}_i - D_{\bm{\theta}}(\hat{\bm{x}}_i,y,\bm{\Sigma}(\hat{t}_i,y;\lambda_y)))$
				
				\State $\bm{d}_i\leftarrow \frac{1}{2}\dot{\bm{\Sigma}}(\hat{t}_i,y;\lambda_y)\bm{\Sigma}(\hat{t}_i,y;\lambda_y)^{-1}\bm{r}_i$
				
				\State $\bm{x}_{i-1}\leftarrow \hat{\bm{x}}_i + (t_{i-1}-\hat{t}_i)\bm{d}_i $
				
				\If{$t_{i-1}\neq 0$}
					\State $\bm{r}_{i-1} \leftarrow (\bm{x}_{i-1} - D_{\bm{\theta}}(\bm{x}_{i-1},y,\bm{\Sigma}(t_{i-1},y;\lambda_y)))$
					
					\State $\bm{d}_i^\prime\leftarrow \frac{1}{2}\dot{\bm{\Sigma}}(t_{i-1},y;\lambda_y)\bm{\Sigma}(t_{i-1},y;\lambda_y)^{-1}\bm{r}_{i-1}$
					
					\State $\bm{x}_{i-1}\leftarrow \hat{\bm{x}}_i + (t_{i-1}-\hat{t}_i)(\frac{1}{2}\bm{d}_i+\frac{1}{2}\bm{d}_i^\prime)$
				\EndIf

				\State $i\leftarrow i-1$
			\EndWhile
			\State \Return $\bm{x}_0$
		\end{algorithmic}
	\end{algorithm}

	\section{Experiment} \label{sec:experiment}

	\begin{table*}[!h]  
		\setlength{\tabcolsep}{1mm} 
		\footnotesize
			\centering
			\caption{ Average Quality of Low-Resolution Generated Images ($\leq 128\times128$).
            Arrows ($\downarrow$ or $\uparrow$) indicate whether lower or higher values are preferred. The best and second-best results according to the primary metric (SFID) are highlighted in \textbf{bold} and \underline{underlined}, respectively. Evaluation results corresponding to obvious failure cases (e.g., severe label inconsistency and mode collapse) are highlighted in \fail{red}.  Cell-200 does not contain categorical labels; therefore, its Diversity metric is not available. }
			\begin{tabular}{c|c|c|c|cccc}
				\toprule
				\begin{tabular}[c]{@{}c@{}} \textbf{Dataset} \\ (resolution)\end{tabular} & \begin{tabular}[c]{@{}c@{}} \textbf{Condition} \\ \textbf{Type} \end{tabular} & \begin{tabular}[c]{@{}c@{}} \textbf{Framework} \\ \textbf{Type} \end{tabular} & \textbf{Method} & \begin{tabular}[c]{@{}c@{}} \textbf{SFID} \\ (primary)\end{tabular} $\downarrow$   & \textbf{NIQE} $\downarrow$ & \textbf{Diversity} $\uparrow$ & \begin{tabular}[c]{@{}c@{}} \textbf{Label} \\ \textbf{Score} \end{tabular} $\downarrow$ \\
				\midrule
				
				\multirow{10}[2]{*}{\begin{tabular}[c]{@{}c@{}} \textbf{RC-49} \\ {(64$\times$64)}\end{tabular}} & \multirow{4}[0]{*}{Text} & \multirow{4}[0]{*}{Diffusion} & SD v1.5 (CVPR'22)~\cite{rombach2022high} & \fail{0.546}  & 2.910  & \fail{1.730}  & \fail{10.743}  \\
				&   &    & SD 3 Medium (ICML'24)~\cite{esser2024scaling} & \fail{1.215}  & 1.996  & \fail{2.272}  & \fail{28.726} \\
				&   &    & FLUX.1 (arXiv'25)~\cite{labs2025flux1kontextflowmatching} & \fail{0.515}  & 2.330  & \fail{2.512}  & \fail{12.751}  \\
                &   &    & Qwen-Image (arXiv'25)~\cite{wu2025qwen} & \fail{1.132} & 2.436 & \fail{2.470} & \fail{27.757}  \\
				\cdashline{2-8} 
				\specialrule{0em}{1pt}{1pt}
				
				& \multirow{6}[0]{*}{Continuous} & \multirow{3}[0]{*}{GAN} & CcGAN (T-PAMI'23)~\cite{ding2023ccgan} & 0.126  & 1.809  & 3.451  & 2.655  \\
				&   &    & Dual-NDA (AAAI'24)~\cite{ding2024turning} & 0.148  & 1.808  & 3.344  & 2.211  \\
				&   &    & CcGAN-AVAR (arXiv'26)~\cite{ding2025imbalance} & \textbf{0.042}   & {1.734} & {3.723}   & 1.270   \\
				\cdashline{3-8}
				\specialrule{0em}{1pt}{1pt}
				& & \multirow{3}[0]{*}{Diffusion}  & CcDPM (AAAI'24)~\cite{zhao2024ccdpm} & \fail{0.970}  & 2.153  & 3.581  & \fail{24.174}  \\
				&   &    & CCDM (TMM'26)~\cite{ding2025ccdm} & 0.049  & 2.086 & {3.698}  & {1.074}  \\
				&   &    & \textbf{iCCDM (ours)} & \underline{0.049}  & {1.718}  & 3.696  & {0.813}  \\
				\hline\specialrule{0em}{1pt}{1pt}
				
				\multirow{10}[2]{*}{\begin{tabular}[c]{@{}c@{}} \textbf{UTKFace} \\ {(64$\times$64)}\end{tabular}} & \multirow{4}[0]{*}{Text} & \multirow{4}[0]{*}{Diffusion} & SD v1.5 (CVPR'22)~\cite{rombach2022high} & 1.013  & 2.607  & 1.091  & {5.027}  \\
				&   &    & SD 3 Medium (ICML'24)~\cite{esser2024scaling} & \fail{2.012}  & 2.068 & \fail{0.426}  & {5.452}  \\
				&   &    & FLUX.1 (arXiv'25)~\cite{labs2025flux1kontextflowmatching} & 0.816 & 1.915 & \fail{0.713}  & 9.174  \\
                &   &    & Qwen-Image (arXiv'25)~\cite{wu2025qwen} & 1.973   & 2.135   & 1.051   & \fail{10.770}   \\
				\cdashline{2-8} 
				\specialrule{0em}{1pt}{1pt}
				
				& \multirow{6}[0]{*}{Continuous} & \multirow{3}[0]{*}{GAN} & CcGAN (T-PAMI'23)~\cite{ding2023ccgan} & 0.413 & 1.733 & {1.329} & 8.240 \\
				&   &    & Dual-NDA (AAAI'24)~\cite{ding2024turning} & 0.396 & 1.678 & {1.298} & 6.765 \\
				&   &    & CcGAN-AVAR (arXiv'26)~\cite{ding2025imbalance} & \underline{0.356} & 1.691 & 1.278 & 6.696  \\
				\cdashline{3-8}
				\specialrule{0em}{1pt}{1pt}
				& & \multirow{3}[0]{*}{Diffusion}  & CcDPM (AAAI'24)~\cite{zhao2024ccdpm} & 0.466 & {1.560} & 1.211 & 6.868 \\
				&   &    & CCDM (TMM'26)~\cite{ding2025ccdm} & 0.363 & {1.542} & 1.184 & {6.164} \\
				&   &    & \textbf{iCCDM (ours)} & \textbf{0.353}  & {1.561}  & 1.146  & {5.788} \\
				\hline\specialrule{0em}{1pt}{1pt}
				
				\multirow{10}[2]{*}{\begin{tabular}[c]{@{}c@{}} \textbf{Steering Angle} \\ {(64$\times$64)}\end{tabular}} & \multirow{4}[0]{*}{Text} & \multirow{4}[0]{*}{Diffusion} & SD v1.5 (CVPR'22)~\cite{rombach2022high} & 1.975  & 1.676  & 1.078  & 8.907  \\
				&   &    & SD 3 Medium (ICML'24)~\cite{esser2024scaling} & \fail{4.007}  & 1.685 & 0.833  & \fail{19.885}  \\
				&   &    & FLUX.1 (arXiv'25)~\cite{labs2025flux1kontextflowmatching} & 1.795  & 1.766 & 1.416  & \fail{19.911}  \\
                &   &    & Qwen-Image (arXiv'25)~\cite{wu2025qwen} & 2.557 & 2.259 & 1.357   &  \fail{20.734} \\
				\cdashline{2-8} 
				\specialrule{0em}{1pt}{1pt}
				
				& \multirow{6}[0]{*}{Continuous} & \multirow{3}[0]{*}{GAN} & CcGAN (T-PAMI'23)~\cite{ding2023ccgan} & 1.334 & 1.784 & {1.234} & 14.807 \\
				&   &    & Dual-NDA (AAAI'24)~\cite{ding2024turning} & 1.114 & {1.738} & {1.251} & 11.809 \\
				&   &    & CcGAN-AVAR (arXiv'26)~\cite{ding2025imbalance} & 0.809 & 1.800  & 1.204 & 6.963  \\
				\cdashline{3-8}
				\specialrule{0em}{1pt}{1pt}
				& & \multirow{3}[0]{*}{Diffusion}  & CcDPM (AAAI'24)~\cite{zhao2024ccdpm} & 0.939  & 1.761 & 1.150  &  10.999 \\
				&   &    & CCDM (TMM'26)~\cite{ding2025ccdm} & \underline{0.742} & 1.778 & 1.088 & {5.823} \\
				&   &    & \textbf{iCCDM (ours)} & \textbf{0.501} & {1.700} & 1.093  & {4.891}  \\
				\hline\specialrule{0em}{1pt}{1pt}
				
				\multirow{6}[2]{*}{\begin{tabular}[c]{@{}c@{}} \textbf{Cell-200} \\ {(64$\times$64)}\end{tabular}} & \multirow{6}[0]{*}{Continuous} & \multirow{3}[0]{*}{GAN} & CcGAN (T-PAMI'23)~\cite{ding2023ccgan} & 6.848  & 1.298  & ---     & 5.210 \\
				&   &    & Dual-NDA (AAAI'24)~\cite{ding2024turning} & 6.439 & {1.095} & ---   & 6.420 \\
				&   &    & CcGAN-AVAR (arXiv'26)~\cite{ding2025imbalance} & 7.665 & 1.610 & --- & 7.499 \\
				\cdashline{3-8}
				\specialrule{0em}{1pt}{1pt}
				& & \multirow{3}[0]{*}{Diffusion}  & CcDPM (AAAI'24)~\cite{zhao2024ccdpm} & 6.394  & 1.208  & --- & 3.196  \\
				&   &    & CCDM (TMM'26)~\cite{ding2025ccdm} & \textbf{5.122}  & 1.184 & --- & {2.941} \\
				&   &    & \textbf{iCCDM (ours)} & \underline{5.276} & {0.961}  & --- & {2.983}  \\
				\hline\specialrule{0em}{1pt}{1pt}
				
				\multirow{10}[2]{*}{\begin{tabular}[c]{@{}c@{}} \textbf{UTKFace} \\ {(128$\times$128)}\end{tabular}} & \multirow{4}[0]{*}{Text} & \multirow{4}[0]{*}{Diffusion} & SD v1.5 (CVPR'22)~\cite{rombach2022high} & 0.768  & 2.412  & 1.059  & {5.195}  \\
				&   &    & SD 3 Medium (ICML'24)~\cite{esser2024scaling} & \fail{1.582} & 1.901 & \fail{0.396} & {5.019}  \\
				&   &    & FLUX.1 (arXiv'25)~\cite{labs2025flux1kontextflowmatching} & 0.643  & 1.748  & \fail{0.661}  & 9.281  \\
                &   &    & Qwen-Image (arXiv'25)~\cite{wu2025qwen} & 1.517   & 1.653   & 0.971   & 9.731   \\
				\cdashline{2-8} 
				\specialrule{0em}{1pt}{1pt}
				
				& \multirow{6}[0]{*}{Continuous} & \multirow{3}[0]{*}{GAN} & CcGAN (T-PAMI'23)~\cite{ding2023ccgan} & 0.367 & 1.113 & 1.199 & 7.747 \\
				&   &    & Dual-NDA (AAAI'24)~\cite{ding2024turning} & 0.361 & 1.081 & {1.257} & 6.310 \\
				&   &    & CcGAN-AVAR (arXiv'26)~\cite{ding2025imbalance} & \underline{0.297} & 1.173 & {1.251} & 6.586  \\
				\cdashline{3-8}
				\specialrule{0em}{1pt}{1pt}
				& & \multirow{3}[0]{*}{Diffusion}  & CcDPM (AAAI'24)~\cite{zhao2024ccdpm} & 0.529 & 1.114 & 1.195  & 7.933 \\
				&   &    & CCDM (TMM'26)~\cite{ding2025ccdm} & 0.319 & {1.077} & 1.178 & {6.359} \\
				&   &    & \textbf{iCCDM (ours)} & \textbf{0.294} & {1.048}  & 1.225  & {5.823}  \\
				\hline\specialrule{0em}{1pt}{1pt}
				
				\multirow{10}[2]{*}{\begin{tabular}[c]{@{}c@{}} \textbf{Steering Angle} \\ {(128$\times$128)}\end{tabular}} & \multirow{4}[0]{*}{Text} & \multirow{4}[0]{*}{Diffusion} & SD v1.5 (CVPR'22)~\cite{rombach2022high} & 1.843  & 3.543  & 1.174  & 9.479  \\
				&   &    & SD 3 Medium (ICML'24)~\cite{esser2024scaling} & \fail{4.191} & 4.071  & \fail{0.512}  & \fail{20.072}  \\
				&   &    & FLUX.1 (arXiv'25)~\cite{labs2025flux1kontextflowmatching} & 1.676 & 3.090  & 1.072  & \fail{20.526} \\
                &   &    & Qwen-Image (arXiv'25)~\cite{wu2025qwen} & 2.910 & 3.749 & 1.062   & \fail{18.455}   \\
				\cdashline{2-8} 
				\specialrule{0em}{1pt}{1pt}
				
				& \multirow{6}[0]{*}{Continuous} & \multirow{3}[0]{*}{GAN} & CcGAN (T-PAMI'23)~\cite{ding2023ccgan} & {1.689} & {2.411} & {1.088} & \fail{18.438} \\
				&   &    & Dual-NDA (AAAI'24)~\cite{ding2024turning} & {1.390} & {2.135} & {1.133} & {14.099} \\
				&   &    & CcGAN-AVAR (arXiv'26)~\cite{ding2025imbalance} & \underline{0.888} & 2.288 & 1.123 & {7.507} \\
				\cdashline{3-8}
				\specialrule{0em}{1pt}{1pt}
				& & \multirow{3}[0]{*}{Diffusion}  & CcDPM (AAAI'24)~\cite{zhao2024ccdpm} & 1.285  & 1.989  & {1.203} & \fail{18.325} \\
				&   &    & CCDM (TMM'26)~\cite{ding2025ccdm} & 0.987 & {1.977} & 1.118 & 11.829 \\
				&   &    & \textbf{iCCDM (ours)} & \textbf{0.742} & {1.973} & 1.007  & {5.652}  \\
				\bottomrule
			\end{tabular}%
			\label{tab:main_results_low}%
	\end{table*}%

	\begin{table*}[!ht]  
		\setlength{\tabcolsep}{1mm} 
		\footnotesize
		\centering
		\caption{ Average Quality of High-Resolution Generated Images ($\geq 192\times 192$).
            Arrows ($\downarrow$ or $\uparrow$) indicate whether lower or higher values are preferred. The best and second-best results according to the primary metric (SFID) are highlighted in \textbf{bold} and \underline{underlined}, respectively. Evaluation results corresponding to obvious failure cases (e.g., severe label inconsistency and mode collapse) are highlighted in \fail{red}. }
		\begin{tabular}{c|c|c|c|cccc}
			\toprule
			\begin{tabular}[c]{@{}c@{}} \textbf{Dataset} \\ (resolution)\end{tabular} & \begin{tabular}[c]{@{}c@{}} \textbf{Condition} \\ \textbf{Type} \end{tabular} & \begin{tabular}[c]{@{}c@{}} \textbf{Framework} \\ \textbf{Type} \end{tabular} & \textbf{Method} & \begin{tabular}[c]{@{}c@{}} \textbf{SFID} \\ (primary)\end{tabular} $\downarrow$  & \textbf{NIQE} $\downarrow$ & \textbf{Diversity} $\uparrow$ & \begin{tabular}[c]{@{}c@{}} \textbf{Label} \\ \textbf{Score} \end{tabular} $\downarrow$ \\
			\midrule
			
			\multirow{10}[2]{*}{\begin{tabular}[c]{@{}c@{}} \textbf{UTKFace} \\ {(192$\times$192)}\end{tabular}} & \multirow{4}[0]{*}{Text} & \multirow{4}[0]{*}{Diffusion} & SD v1.5 (CVPR'22)~\cite{rombach2022high} & 0.928  & 2.941  & 1.053  & {5.065}  \\
			&   &    & SD 3 Medium (ICML'24)~\cite{esser2024scaling} & \fail{1.682} & 2.436 & \fail{0.383} & {5.188}  \\
			&   &    & FLUX.1 (arXiv'25)~\cite{labs2025flux1kontextflowmatching} & 0.771 & 2.499  & 0.602 & 9.191 \\
            &   &    & Qwen-Image (arXiv'25)~\cite{wu2025qwen} & 1.617   & 1.909   & 0.943   & 9.868   \\
			\cdashline{2-8} 
			\specialrule{0em}{1pt}{1pt}
			
			& \multirow{6}[0]{*}{Continuous} & \multirow{3}[0]{*}{GAN} & CcGAN (T-PAMI'23)~\cite{ding2023ccgan} & 0.499 & 1.661 & {1.207} & 7.885 \\
			&   &    & Dual-NDA (AAAI'24)~\cite{ding2024turning} & 0.487 & {1.483} & 1.201 & 6.730 \\
			&   &    & CcGAN-AVAR (arXiv'26)~\cite{ding2025imbalance} & \underline{0.435} & 1.588 & 1.187 & 6.377  \\
			\cdashline{3-8}
			\specialrule{0em}{1pt}{1pt}
			& & \multirow{3}[0]{*}{Diffusion}  & CcDPM (AAAI'24)~\cite{zhao2024ccdpm} & 0.970  &  1.522  &  1.187  & \fail{11.224} \\
			&   &    & CCDM (TMM'26)~\cite{ding2025ccdm} & 0.467 &  {1.242} &  1.148 & 7.336 \\
			&   &    & \textbf{iCCDM (ours)} & \textbf{0.433} & 1.850 & {1.210} & 6.323 \\
			\hline\specialrule{0em}{1pt}{1pt}
			
			\multirow{10}[2]{*}{\begin{tabular}[c]{@{}c@{}} \textbf{UTKFace} \\ {(256$\times$256)}\end{tabular}} & \multirow{4}[0]{*}{Text} & \multirow{4}[0]{*}{Diffusion} & SD v1.5 (CVPR'22)~\cite{rombach2022high} & 0.699 & 2.308  & 0.977  & 8.225  \\
			&   &    & SD 3 Medium (ICML'24)~\cite{esser2024scaling} & \fail{0.906}  & 1.524  & \fail{0.454}  & 5.850  \\
			&   &    & FLUX.1 (arXiv'25)~\cite{labs2025flux1kontextflowmatching} & 0.369  & 1.576  & \fail{0.614}  & 9.010  \\
            &   &    & Qwen-Image (arXiv'25)~\cite{wu2025qwen} & 0.953   & 1.362   & 0.980   & \fail{10.941}   \\
			\cdashline{2-8} 
			\specialrule{0em}{1pt}{1pt}
			
			& \multirow{6}[0]{*}{Continuous} & \multirow{3}[0]{*}{GAN} & CcGAN (T-PAMI'23)~\cite{ding2023ccgan} & 0.347 & {1.308} & 1.053 & 6.010 \\
			&   &    & Dual-NDA (AAAI'24)~\cite{ding2024turning} & 0.319 & 1.381 & {1.216} & {5.712} \\
			&   &    & CcGAN-AVAR (arXiv'26)~\cite{ding2025imbalance} & \textbf{0.196}  & 1.347 & {1.219} & {5.798} \\
			\cdashline{3-8}
			\specialrule{0em}{1pt}{1pt}
			& & \multirow{3}[0]{*}{Diffusion}  & CcDPM (AAAI'24)~\cite{zhao2024ccdpm} & 0.559 & 1.552  & 0.854 & \fail{12.121} \\
			&   &    & CCDM (TMM'26)~\cite{ding2025ccdm} & 0.268 & 1.364 & 1.121 & 7.555 \\
			&   &    & \textbf{iCCDM (ours)} & \underline{0.218} & {1.207} & 1.142 & 6.199 \\
			\hline\specialrule{0em}{1pt}{1pt}
			
			\multirow{10}[2]{*}{\begin{tabular}[c]{@{}c@{}} \textbf{Steering Angle} \\ {(256$\times$256)}\end{tabular}} & \multirow{4}[0]{*}{Text} & \multirow{4}[0]{*}{Diffusion} & SD v1.5 (CVPR'22)~\cite{rombach2022high} & 1.023  & 1.747  & 0.937  & 5.458  \\
			&   &    & SD 3 Medium (ICML'24)~\cite{esser2024scaling} & \fail{1.859}  & 1.595  & 0.907  & \fail{19.806}  \\
			&   &    & FLUX.1 (arXiv'25)~\cite{labs2025flux1kontextflowmatching} & 0.945  & 2.353  & 1.078  & \fail{17.228}  \\
            &   &    & Qwen-Image (arXiv'25)~\cite{wu2025qwen} & 1.383   & 2.525   & 1.001   & \fail{16.034} \\
			\cdashline{2-8} 
			\specialrule{0em}{1pt}{1pt}
			
			& \multirow{6}[0]{*}{Continuous} & \multirow{3}[0]{*}{GAN} & CcGAN (T-PAMI'23)~\cite{ding2023ccgan} & 0.984 & 1.999 & 1.054 & 8.399 \\
			&   &    & Dual-NDA (AAAI'24)~\cite{ding2024turning} & 0.967 & 1.834 & 0.951 & 8.338 \\
			&   &    & CcGAN-AVAR (arXiv'26)~\cite{ding2025imbalance} & \underline{0.683}  & 1.966  & 1.113  & {4.958} \\
			\cdashline{3-8}
			\specialrule{0em}{1pt}{1pt}
			& & \multirow{3}[0]{*}{Diffusion}  & CcDPM (AAAI'24)~\cite{zhao2024ccdpm} & 1.217 & 1.748 & {1.330} & \fail{29.239} \\
			&   &    & CCDM (TMM'26)~\cite{ding2025ccdm} & 0.902 & {1.616} & {1.151} & \fail{23.057} \\
			&   &    & \textbf{iCCDM (ours)} & \textbf{0.582}  & {1.390}  & 0.981 & {4.743}  \\
			
			\bottomrule
		\end{tabular}%
		\label{tab:main_results_high}%
	\end{table*}%

	\begin{figure}[!htbp] 
		\centering
		\includegraphics[width=0.8\linewidth]{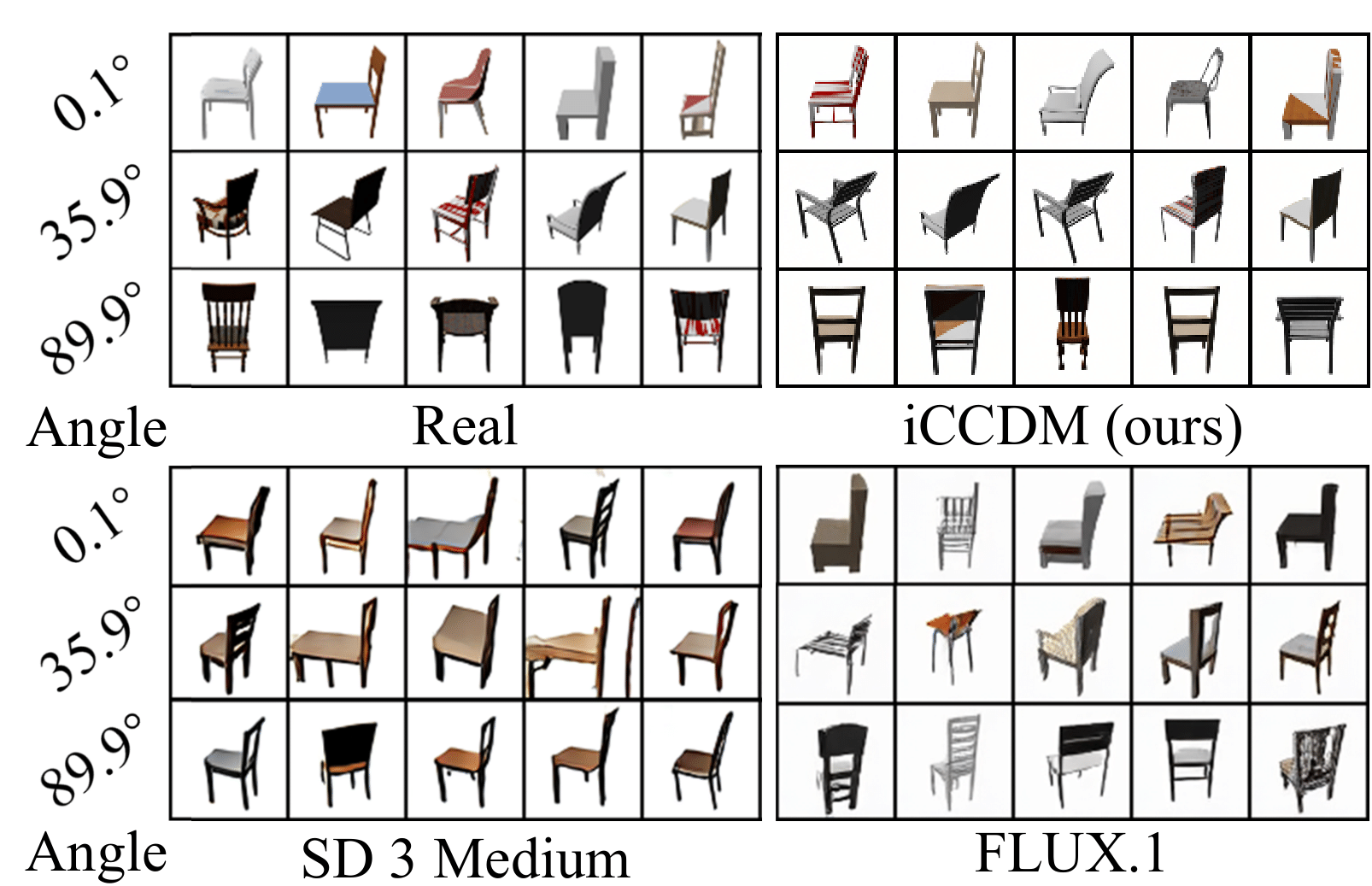}
		\caption{ An Illustrative Example Highlighting the Superiority of iCCDM over State-of-the-Art Text-to-Image Models for Rotation-Angle Conditioning on RC-49. }
		\label{fig:example_imgs_real_and_fake}
	\end{figure}

    \begin{table}[!htbp]
		\setlength{\tabcolsep}{1mm} 
		\centering
		\caption{ Comparison of Sampling Quality and Efficiency Across Diffusion-Based Methods on the Steering Angle Dataset ($64\times 64$). Sampler names are given in parentheses, with the corresponding numbers of sampling steps indicated after hyphens.  }
			\begin{tabular}{c|c|cc}
				\toprule
				\textbf{Method} & \textbf{SFID}  & \begin{tabular}[c]{@{}c@{}} {\textbf{Speed}} \\ (img/sec) \end{tabular} & \begin{tabular}[c]{@{}c@{}} {\textbf{Memory}} \\ (GiB) \end{tabular}  \\
				\midrule
				SD v1.5 (DDIM-28) & 1.975 & 52.63 & 2.50 \\
				SD 3 Medium (ODE-28) & 4.010 & 2.88 & 19.74 \\
				FLUX.1 (ODE-14) & 1.795 & 0.46 & 42.08 \\
                Qwen-Image (ODE-14) & 2.557 & 3.23 & 56.9 \\
				\cdashline{1-4}\specialrule{0em}{1pt}{1pt}
				CcDPM (DDIM-150) & 0.939 & 3.33 & 2.28 \\
                CCDM (DDIM-150) & 0.742 & 3.33 & 2.74 \\
 				\cdashline{1-4  }\specialrule{0em}{1pt}{1pt}
				iCCDM (SDE-32) & 0.501 & 12.82 & 2.50 \\
				\bottomrule
			\end{tabular}%
		\label{tab:ab_sampling_cost}%
	\end{table}%

	\begin{table}[!h]  
		\setlength{\tabcolsep}{1mm}
		\centering
		\caption{ Component-wise Analysis of iCCDM. HFV and HAV denote hard fixed and hard adaptive vicinities, respectively. When $\lambda_y = 0$ and AV are disabled, iCCDM reduces to the vanilla EDM.}
			\begin{tabular}{c|ccc|cccc}
				\toprule
				\multirow{2}[0]{*}{\textbf{Dataset}}  & \multicolumn{3}{c|}{\textbf{Configuration}} & \multirow{2}[0]{*}{\begin{tabular}[c]{@{}c@{}} \textbf{SFID} \\ {(primary)} \end{tabular}} & \multirow{2}[0]{*}{\textbf{NIQE}} & \multirow{2}[0]{*}{\begin{tabular}[c]{@{}c@{}} \textbf{Diver} \\ \textbf{-sity} \end{tabular}} & \multirow{2}[0]{*}{\begin{tabular}[c]{@{}c@{}} \textbf{Label} \\ \textbf{Score} \end{tabular}} \\
				\textbf{} &  \textbf{HFV} &  \textbf{HAV} & $\lambda_y>0$ & \textbf{} & \textbf{} & \textbf{} & \textbf{} \\
				\midrule
				\multirow{5}[0]{*}{\begin{tabular}[c]{@{}c@{}} \textbf{UTKFace} \\ ($64\times 64$) \end{tabular}} &    &    &   & 0.356 & 1.730  & 1.108  & 6.001 \\
				\cdashline{2-8} \specialrule{0em}{1pt}{1pt}
				&\checkmark &  &  & 0.353 & 1.716  & 1.157  & 5.791 \\
				&\checkmark &  & \checkmark & 0.353 & 1.593  & 1.151  & 5.710  \\
				\cdashline{2-8} \specialrule{0em}{1pt}{1pt}
				& &  \checkmark &  & 0.356 & 1.696  & 1.156  & 5.767  \\
				& & \checkmark & \checkmark & 0.353  & 1.561  & 1.146  & 5.788 \\
				\cline{1-8} \specialrule{0em}{1pt}{1pt}
				\multirow{5}[0]{*}{\begin{tabular}[c]{@{}c@{}} \textbf{Steering} \\ \textbf{Angle} \\ ($64\times 64$) \end{tabular}} &    &    &   & 0.813 & 1.750  & 1.083  & 5.044  \\
				\cdashline{2-8} \specialrule{0em}{1pt}{1pt}
				& \checkmark &  &  & 0.644  & 1.764  & 1.063  & 4.333  \\
				& \checkmark &  & \checkmark & 0.561  & 1.731  & 1.123 & 5.136  \\
				\cdashline{2-8} \specialrule{0em}{1pt}{1pt}
				& &  \checkmark &  & 0.546 & 1.719 & 1.119 & 5.010  \\
				& & \checkmark & \checkmark & 0.501 & 1.700 & 1.093  & 4.891 \\
				\bottomrule
			\end{tabular}%
		\label{tab:ablation_components}%
	\end{table}%

    \begin{figure}[!h]  
		\centering
		\includegraphics[width=1\linewidth]{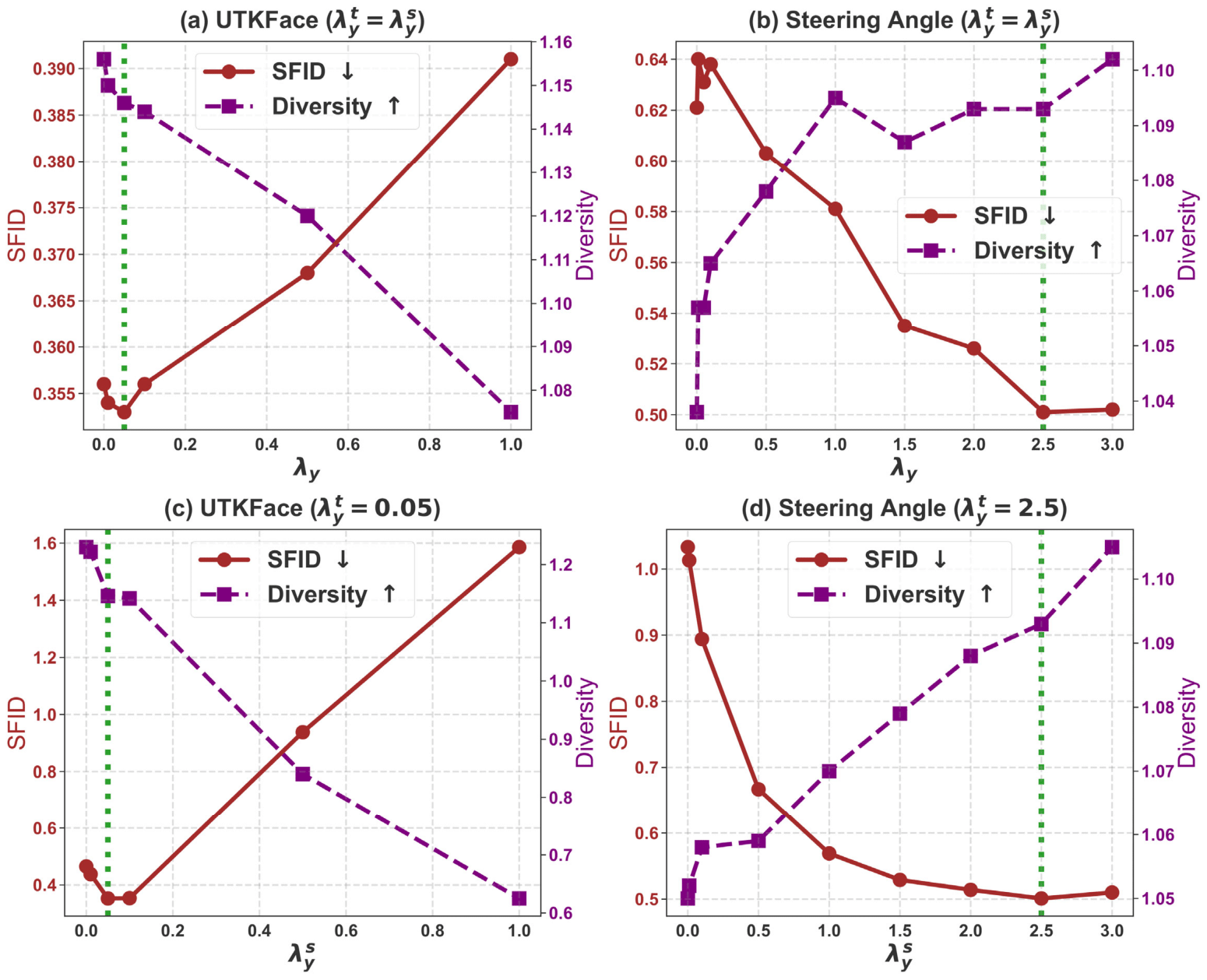}
		\caption{ Effect of the Weighting Coefficient $\lambda_y$ in Eq.~\eqref{eq:g_i} on iCCDM Across Two $64\times64$ Datasets. $\lambda_y^t$ and $\lambda_y^s$ denote the values of $\lambda_y$ used during training and sampling, respectively. }
		\label{fig:ab_effect_of_lambda_y}
	\end{figure}

    \begin{table}[!h]  
		\setlength{\tabcolsep}{1mm} 
		\footnotesize
		\centering
		\caption{ Effect of the $F_{\bm{\theta}}$ Network Architecture on iCCDM. }
			\begin{tabular}{c|c|cccc}
				\toprule
				\textbf{Dataset} & $F_{\bm{\theta}}$  & \begin{tabular}[c]{@{}c@{}} \textbf{SFID} \\ {(primary)} \end{tabular} & \textbf{NIQE} & \begin{tabular}[c]{@{}c@{}} \textbf{Diver} \\ \textbf{-sity} \end{tabular} & \begin{tabular}[c]{@{}c@{}} \textbf{Label} \\ \textbf{Score} \end{tabular} \\
				\midrule
								
				\multirow{3}[0]{*}{\begin{tabular}[c]{@{}c@{}} \begin{tabular}[c]{@{}c@{}} \textbf{UTKFace} \\ ($64\times 64$) \end{tabular} \end{tabular}} & CCDM & 0.353  & 1.561  & 1.146  & 5.788  \\
				& EDM & 0.351 & 1.644 & 1.223 & 6.611  \\
				& DiT   & 0.362  & 1.675  & 1.131  & 6.169  \\
				
				\cline{1-6}
				\specialrule{0em}{1pt}{1pt}
				
				\multirow{3}[0]{*}{\begin{tabular}[c]{@{}c@{}} \begin{tabular}[c]{@{}c@{}} \textbf{Steering Angle} \\ ($64\times 64$) \end{tabular} \end{tabular}} & CCDM & 0.501 & 1.700 & 1.093  & 4.891 \\	
				& EDM & 0.734 & 1.614  & 1.172  & 6.979  \\
				& DiT   & 0.669  & 1.614  & 1.077  & 5.677  \\
				
				\cline{1-6}
				\specialrule{0em}{1pt}{1pt}
				
				\multirow{2}[0]{*}{\begin{tabular}[c]{@{}c@{}} \begin{tabular}[c]{@{}c@{}} \textbf{UTKFace} \\ ($256\times 256$) \end{tabular} \end{tabular}} & CCDM &  0.230  & 1.389 & 1.177 & 6.599 \\ 
				&  DiT & 0.218 & 1.207 & 1.142 & 6.199 \\
				
				\cline{1-6}
				\specialrule{0em}{1pt}{1pt}
				
				\multirow{2}[0]{*}{\begin{tabular}[c]{@{}c@{}} \begin{tabular}[c]{@{}c@{}} \textbf{Steering Angle} \\ ($256\times 256$) \end{tabular} \end{tabular}} & CCDM & 0.763  & 2.549  & 1.202  & 13.780 \\	
				& DiT   & 0.582  & 1.390  & 0.981 & 4.743  \\
				
				\bottomrule
			\end{tabular}%
		\label{tab:ab_effect_network}%
	\end{table}%

   \begin{table}[!h]  
		\setlength{\tabcolsep}{1mm} 
		\footnotesize
		\centering
		\caption{ The Impact of Vicinity Type on iCCDM. FV and AV denote fixed and adaptive vicinities, respectively. }
			\begin{tabular}{c|c|cccc}
				\toprule
				\textbf{Dataset} & \textbf{Type}  & \begin{tabular}[c]{@{}c@{}} \textbf{SFID} \\ {(primary)} \end{tabular} & \textbf{NIQE} & \begin{tabular}[c]{@{}c@{}} \textbf{Diver} \\ \textbf{-sity} \end{tabular} & \begin{tabular}[c]{@{}c@{}} \textbf{Label} \\ \textbf{Score} \end{tabular} \\
				\midrule
				
				\multirow{5}[0]{*}{\begin{tabular}[c]{@{}c@{}} \begin{tabular}[c]{@{}c@{}} \textbf{UTKFace} \\ ($64\times 64$) \end{tabular} \end{tabular}} & Soft FV & 0.349  & 1.557  & 1.174  & 6.423  \\
				& Soft AV & 0.357  & 1.591  & 1.180  & 6.379  \\
				& Hybrid AV & 0.349  & 1.583  & 1.143  & 5.897  \\
				& Hard FV & 0.353  & 1.593  & 1.151  & 5.710  \\
				& Hard AV & 0.353  & 1.561  & 1.146  & 5.788  \\
				
				\cline{1-6}
				\specialrule{0em}{1pt}{1pt}
				
				\multirow{5}[0]{*}{\begin{tabular}[c]{@{}c@{}} \begin{tabular}[c]{@{}c@{}} \textbf{Steering Angle} \\ ($64\times 64$) \end{tabular} \end{tabular}} & Soft FV & 0.736 & 1.743  & 1.132  & 6.136  \\
				& Soft AV & 0.661 & 1.736  & 1.125  & 5.597  \\
				& Hybrid AV & 0.561  & 1.731  & 1.123 & 5.136  \\
				& Hard FV & 0.546  & 1.719  & 1.119  & 5.010  \\
				& Hard AV & 0.501 & 1.700 & 1.093  & 4.891 \\	
				
				\bottomrule
			\end{tabular}%
		\label{tab:ab_effect_vicinity_type}%
	\end{table}%

    \begin{table}[!h]  
		\setlength{\tabcolsep}{1mm} 
		\footnotesize
		\centering
		\caption{The Impact of Different Samplers on iCCDM. The sampling steps for SDE/ODE samplers are indicated after the hyphen. }
			\begin{tabular}{c|c|cccc}
				\toprule
				\textbf{Dataset} & \textbf{Sampler} & \begin{tabular}[c]{@{}c@{}} \textbf{SFID} \\ {(primary)} \end{tabular} & \textbf{NIQE} & \begin{tabular}[c]{@{}c@{}} \textbf{Diver} \\ \textbf{-sity} \end{tabular} & \begin{tabular}[c]{@{}c@{}} \textbf{Label} \\ \textbf{Score} \end{tabular} \\ 
				\midrule
				
				\multirow{6}[0]{*}{\begin{tabular}[c]{@{}c@{}} \begin{tabular}[c]{@{}c@{}} \textbf{UTKFace} \\ ($64\times 64$) \end{tabular} \end{tabular}} & SDE-10 & 0.470 & 1.813  & 1.262  & 7.645 \\
				& SDE-32 & 0.353  & 1.561  & 1.146  & 5.788  \\
				& SDE-50 & 0.368  & 1.828 & 1.197 & 6.405 \\
				\cdashline{2-6} \specialrule{0em}{1pt}{1pt}
				& ODE-10 & 0.406  & 1.750  & 1.242  & 7.528  \\
				& ODE-32 & 0.358  & 1.885  & 1.208  & 7.138   \\
				& ODE-50 & 0.356  & 1.885  & 1.205  & 7.119  \\
				
				\cline{1-6} \specialrule{0em}{1pt}{1pt}
				
				\multirow{6}[0]{*}{\begin{tabular}[c]{@{}c@{}} \textbf{Steering} \\ \textbf{Angle} \\ ($64\times 64$) \end{tabular}} & SDE-10 & 0.537 & 2.227  & 1.120 & 5.368  \\
				& SDE-32 & 0.501 & 1.700 & 1.093 & 4.891 \\
				& SDE-50 & 0.501 & 1.635 & 1.081 & 4.877  \\
				\cdashline{2-6} \specialrule{0em}{1pt}{1pt}
				& ODE-10 & 0.574  & 2.197 & 1.119 & 5.386  \\
				& ODE-32 & 0.570  & 1.651 & 1.111 & 5.477  \\
				& ODE-50 & 0.571  & 1.616 & 1.108 & 5.478  \\
				\bottomrule
			\end{tabular}%
		\label{tab:ab_effect_sampler}%
	\end{table}%

	\begin{table}[!h]  
		\setlength{\tabcolsep}{1mm} 
		\footnotesize
		\centering
		\caption{ The Impact of Covariance Embedding Network's Architecture on iCCDM Across Two $64\times 64$ Datasets. }
			\begin{tabular}{c|c|cccc|c}
				\toprule
				\textbf{Dataset} &  \textbf{Network}  & \begin{tabular}[c]{@{}c@{}} \textbf{SFID} \\ {(primary)} \end{tabular} & \textbf{NIQE} & \begin{tabular}[c]{@{}c@{}} \textbf{Diver} \\ \textbf{-sity} \end{tabular} & \begin{tabular}[c]{@{}c@{}} \textbf{Label} \\ \textbf{Score} \end{tabular} & \textbf{\# Params} \\
				\midrule
				
				\multirow{2}[0]{*}{\textbf{UTKFace}} & MLP5 & 0.353  & 1.614  & 1.147  & 5.742 & $6.14\times 10^7$ \\
				& CNN5 & 0.353  & 1.561  & 1.146  & 5.788 & $2.80\times 10^6$ \\
				
				\cline{1-7}
				\specialrule{0em}{1pt}{1pt}
				
				\multirow{2}[0]{*}{\begin{tabular}[c]{@{}c@{}} \textbf{Steering} \\ \textbf{Angle} \end{tabular}} & MLP5 & 0.498 & 1.676 & 1.092  & 5.301 & $6.14\times 10^7$ \\
				& CNN5 & 0.501 & 1.700 & 1.093  & 4.891 & $2.80\times 10^6$ \\
				
				\bottomrule
			\end{tabular}%
		\label{tab:ab_effect_y2cov_network}%
	\end{table}%

	\subsection{Experimental Setup}\label{sec:main_setup} 
	
	{\setlength{\parindent}{0cm}\textbf{Datasets.}} We compare the performance of iCCDM against other baselines on four benchmark datasets spanning resolutions from $64\times64$ to $256\times 256$: RC-49~\cite{ding2023ccgan}, UTKFace~\cite{utkface}, Steering Angle~\cite{steeringangle,steeringangle2}, and Cell-200~\cite{ding2023ccgan}. These datasets cover a diverse set of CCGM scenarios, including object pose control, age-conditioned face generation, autonomous driving scenes, and microscopic image synthesis.

    The RC-49 dataset contains 44,051 RGB images of 49 chair categories at $64\times64$ resolution, with each image annotated by a continuous yaw angle between $0.1^\circ$ and $89.9^\circ$. We construct a sparse training set by selecting angles with odd last digits and randomly sampling 25 images per angle, resulting in 11,250 training images spanning 450 distinct angles.
    
    UTKFace consists of 14,760 RGB face images labeled by age, ranging from 1 to 60 years. The number of images per age varies from 50 to over 1,000. All samples are used for training, and experiments are conducted at resolutions of $64\times 64$, $128\times 128$, $192\times 192$, and $256\times 256$.
    
    The Steering Angle dataset is derived from an autonomous driving dataset~\cite{steeringangle,steeringangle2} and preprocessed by Ding \emph{et al.}~\cite{ding2023ccgan}. It includes 12,271 RGB images captured from a dashboard-mounted camera, labeled with 1,774 continuous steering angles spanning $[-80^\circ, 80^\circ]$. We conduct experiments at $64\times 64$, $128\times 128$, and $256\times 256$ resolutions.
    
    Cell-200 is a synthetic microscopy dataset comprising 200,000 grayscale images at $64\times64$ resolution, each depicting a variable number of cells from 1 to 200. For training, we subsample images with odd-valued cell counts and randomly select 10 images per count, yielding 1,000 training samples. Evaluation is performed on the full dataset.
    
	{\setlength{\parindent}{0cm}\textbf{Compared Methods.}} We compare three families of generative models: (1) Text-to-image diffusion models, including SD v1.5~\cite{rombach2022high}, SD 3 Medium~\cite{esser2024scaling}, FLUX.1~\cite{labs2025flux1kontextflowmatching, flux2024, flux-2-2025}, and Qwen-Image~\cite{wu2025qwen}. (2) GAN-based CCGM models, namely CcGAN (SVDL+ILI)~\cite{ding2023ccgan}, Dual-NDA~\cite{ding2024turning}, and CcGAN-AVAR~\cite{ding2025imbalance}. (3) Diffusion-based CCGM models, including CcDPM~\cite{zhao2024ccdpm}, CCDM~\cite{ding2025ccdm}, and the proposed iCCDM.
	
	{\setlength{\parindent}{0cm}\textbf{Implementation Setup.}} The implementation settings for GAN-based CCGM models, CcDPM, and CCDM follow those reported in~\cite{ding2025ccdm}. For sampling with CcDPM and CCDM, we adopt DDIM~\cite{song2021denoising}, using 250 sampling steps for $64\times64$ experiments and 150 steps for experiments at higher resolutions.

    For text-to-image diffusion models, we employ Low-Rank Adaptation (LoRA)~\cite{hu2022lora} to fine-tune SD~3, FLUX.1, and Qwen-Image on all RGB datasets, while SD~v1.5 is fully fine-tuned. All fine-tuning is performed based on officially released checkpoints and the corresponding official fine-tuning scripts. Detailed experimental setups are provided in Appendix~G.
 
    
    For iCCDM, we use the CCDM UNet~\cite{ding2025ccdm} as the network backbone for all experiments except for Steering Angle ($256\times256$), where DiT~\cite{peebles2023scalable} is adopted due to its superior performance. Furthermore, for the Steering Angle experiments at $128\times128$ and $256\times256$, we set $\lambda_y=\lambda_y^t=0.01$ during training and $\lambda_y=\lambda_y^s=0.1$ during sampling. In all other experiments, $\lambda_y$ is kept the same for training and sampling, i.e., $\lambda_y^t=\lambda_y^s$. Detailed configurations are summarized in Appendix~F.
    
	{\setlength{\parindent}{0cm}\textbf{Evaluation Setup.}} Following established evaluation protocols~\cite{ding2021ccgan, ding2023ccgan, ding2023efficient, ding2024turning, ding2025ccdm}, we generate 179{,}800, 60{,}000, 100{,}000, and 200{,}000 samples per method on the RC-49, UTKFace, Steering Angle, and Cell-200 datasets, respectively. Performance is primarily assessed using the \textit{Sliding Fr'echet Inception Distance} (SFID)~\cite{ding2023ccgan}, computed as the mean FID across predefined evaluation centers. We additionally report the \textit{Naturalness Image Quality Evaluator} (NIQE)~\cite{mittal2012making}, Diversity~\cite{ding2023ccgan}, and Label Score~\cite{ding2023ccgan} as auxiliary metrics. The Diversity metric is not reported for Cell-200 due to the absence of categorical labels. Lower values indicate better performance for SFID, NIQE, and Label Score, whereas higher Diversity reflects greater sample variety. For details on the evaluation protocol, we refer readers to Appendix~S.V of CCDM~\cite{ding2025ccdm}.
	
	\subsection{Experimental Results}\label{sec:main_results}

	We evaluate the candidate methods across nine experimental settings, which include four datasets and four resolutions. The complete quantitative results are presented in Tables~\ref{tab:main_results_low}, \ref{tab:main_results_high}, and \ref{tab:ab_sampling_cost}, while selected visual results are displayed in Figs.~\ref{fig:scatter_sfid_vs_speed_sa64} and \ref{fig:example_imgs_real_and_fake}. These experimental results lead to the following key findings:  
    \begin{itemize}
        \item Overall, the proposed iCCDM outperforms other diffusion-based models. In particular, iCCDM surpasses CCDM in \textbf{eight out of nine settings} according to the primary metric SFID. The only exception is Cell-200, where iCCDM attains a slightly higher SFID than CCDM, while still exhibiting superior visual quality. Moreover, as shown in Table~\ref{tab:ab_sampling_cost}, iCCDM requires fewer sampling steps, resulting in a 4$\times$ faster sampling speed and lower memory consumption compared to CCDM on the Steering Angle dataset at $64\times64$ resolution.

        \item Despite their large model size and pretraining on massive datasets, state-of-the-art text-to-image diffusion models still struggle with CCGM tasks, as shown in Tables~\ref{tab:main_results_low} and~\ref{tab:main_results_high}. Notably, as illustrated in Fig.~\ref{fig:example_imgs_real_and_fake}, SD 3 and FLUX.1 fail to generate realistic chair images from given angles on the relatively simple RC-49 dataset. 

        \item In cross-model family comparisons, iCCDM outperforms vanilla CcGANs~\cite{ding2023ccgan, ding2024turning} by a substantial margin. Moreover, iCCDM surpasses the recently proposed CcGAN-AVAR in \textbf{seven out of nine settings} according to the SFID metric. The remaining two cases are RC-49 and UTKFace at $256\times256$ resolution, where iCCDM achieves a slightly higher SFID than CcGAN-AVAR, while still delivering superior visual quality.

        
    \end{itemize}


    \section{Ablation Study} \label{sec:ablation_study}

    In addition to the results above, we conduct six ablation studies on UTKFace and Steering Angle to evaluate the impact of key components and hyperparameters in iCCDM. These include the effects of hard adaptive vicinity and $y$-dependent diffusion (Table~\ref{tab:ablation_components}), SDE and ODE samplers with varying steps (Table~\ref{tab:ab_effect_sampler}), different backbones for $F_{\bm{\theta}}$ (Table~\ref{tab:ab_effect_network}), various vicinity types (Table~\ref{tab:ab_effect_vicinity_type}), CNN-based covariance embedding (Table~\ref{tab:ab_effect_y2cov_network}), and different $\lambda_y$ values (Fig.~\ref{fig:ab_effect_of_lambda_y}). These results validate our model design and configuration. Notably, Table~\ref{tab:ablation_components} shows that combining Hard AV and $y$-dependent diffusion yields the best performance, confirming our modification to EDM. Table~\ref{tab:ab_effect_sampler} supports the use of SDE-32 for sampling. Table~\ref{tab:ab_effect_network} highlights that CCDM UNet is best for low-resolution, while DiT excels in $256\times 256$ experiments. Table~\ref{tab:ab_effect_vicinity_type} shows hard AV is more stable, particularly for Steering Angle. Table~\ref{tab:ab_effect_y2cov_network} indicates that using CNN for covariance embedding reduces model size without significant performance loss. Finally, Fig.~\ref{fig:ab_effect_of_lambda_y} demonstrates that the optimal $\lambda_y$ value depends on the dataset, with our selection providing the best results.

	\section{Conclusion} \label{sec:conclusion}
	
	In this paper, we introduce the \textit{improved Continuous Conditional Diffusion Model} (iCCDM), which addresses the limitations of its predecessor, CCDM. By extending the EDM framework, iCCDM introduces condition-aware diffusion processes formulated using matrix-form SDEs and PF-ODEs. We also derive corresponding preconditioning, network backbone, vicinal training, and sampling procedures for the matrix-form setting. Additionally, iCCDM enhances efficiency by replacing the large covariance embedding network with a lightweight CNN, significantly reducing memory overhead. Extensive experiments across various benchmark datasets demonstrate iCCDM’s superior performance, establishing it as a powerful solution for quantitatively controllable image synthesis.

	\bibliographystyle{IEEEtran}
	\bibliography{./bibliography.bib}

	\vfill
	
	\clearpage
	\appendices
	
	\setcounter{figure}{0}
	\setcounter{table}{0}
	\setcounter{equation}{0}
	\renewcommand{\thefigure}{S.\arabic{figure}} 
	\renewcommand{\thetable}{S.\arabic{table}}   
	\renewcommand{\theequation}{S.\arabic{equation}} 
	\renewcommand{\thetheorem}{S.\arabic{theorem}} 
	\renewcommand{\thedefinition}{S.\arabic{definition}} 
	\renewcommand{\thelemma}{S.\arabic{lemma}} 
	\renewcommand{\theremark}{S.\arabic{remark}}
	\renewcommand{\thecorollary}{S.\arabic{corollary}} 
	

	\section{GitHub repository}\label{supp:codes}
	The source code and implementation details will be made publicly available at:
	\begin{center}
		\url{https://github.com/UBCDingXin/CCDM}
	\end{center}

	\section{Diffusion Expressed by It\^{o} SDEs and PF-ODE}

	Diffusion models convert real data into noise by progressively perturbing the data distribution $p_0$ with Gaussian noise, and they also transform noise back into data through a reverse denoising process. Song \emph{et al.}. \cite{song2021scorebased} model these forward and reverse processes using \textit{It\^{o} Stochastic Differential Equations} (It\^{o} SDEs) \cite{oksendal2013stochastic}. 
	
	The forward SDE is generally expressed as:
    \begingroup
	\[
	\mathrm{d}\bm{X}_t\ = \bm{f}(\bm{X}_t,t)\mathrm{d}t + \bm{G}(\bm{X}_t, t)\mathrm{d}\bm{B}_t,
	\label{eq:SDE_forward_general}
	\]
    \endgroup
	where $t\in[0,T]$ is a continuous time variable, $\bm{f}(\cdot,t):\mathbb{R}^d\rightarrow\mathbb{R}^d$ is the drift coefficient, $\bm{G}(\cdot,t):\mathbb{R}^d\rightarrow\mathbb{R}^{d\times d}$ is the diffusion coefficient, and $\bm{B}_t$ is a standard Wiener process. The forward process begins with the initial state $\bm{X}_0$ following the data distribution $p_0$ and ends with the final state $\bm{X}_T$ following a prior distribution $p_T$. The intermediate state $\bm{X}_t$ at time $t$ follows the marginal distribution $p_t$. 
	
	The corresponding reverse-time SDE, as derived by Song \emph{et al.}. \cite{song2021scorebased} based on results in \cite{oksendal2013stochastic}, is given by:
	\begingroup
    \[
	\mathrm{d}\bm{X}_t = & \left\{ \bm{f}(\bm{X}_t,t) - \nabla_{\bm{x}_t}\cdot\left[ \bm{G}(\bm{X}_t, t)\bm{G}(\bm{X}_t, t)^\intercal \right] \right. \\
	& \left. - \bm{G}(\bm{X}_t, t)\bm{G}(\bm{X}_t, t)^\intercal \nabla_{\bm{x}_t}\log p_t(\bm{X}_t) \right\}\mathrm{d}t \label{eq:SDE_reverse_general}\\
	& + \bm{G}(\bm{X}_t, t)\mathrm{d}\bar{\bm{B}_t},
	\]
     \endgroup
	where $\bar{\bm{B}}_t$ is a reverse-time standard Wiener process, and $\nabla_{\bm{x}_t}\log p_t(\bm{X}_t)$ is the score function.
	
	The \textit{Probability Flow Ordinal Differential Equation} (PF-ODE) to Eq.~\eqref{eq:SDE_forward_general} derived by \cite{song2021scorebased} has the following form:
	\begingroup
    \[
	\mathrm{d}\bm{X}_t = & \left\{ \bm{f}(\bm{X}_t,t) - \frac{1}{2}\nabla_{\bm{x}_t}\cdot\left[ \bm{G}(\bm{X}_t, t)\bm{G}(\bm{X}_t, t)^\intercal\right] \right.\\
	& \left. - \frac{1}{2}\bm{G}(\bm{X}_t, t)\bm{G}(\bm{X}_t, t)^\intercal\nabla_{\bm{x}_t}\log p_t(\bm{X}_t)   \right\}\mathrm{d}t.
	\label{eq:pfode_general}
	\]
	\endgroup
     
	Song \emph{et al.}. \cite{song2021scorebased} simplify the general formulations by assuming:
	\begingroup
     \[
	\bm{f}(\bm{X}_t,t) &= f(t)\bm{X}_t, \\
	\bm{G}(\bm{X}_t, t) &= g(t)\bm{I},
	\label{eq:song_assumption}
	\]
     \endgroup
	where $f(\cdot):\mathbb{R}\rightarrow\mathbb{R}$ and $g(\cdot):\mathbb{R}\rightarrow\mathbb{R}$ are scalar-valued functions, and $\bm{I}$ is the identity matrix. Under this simplification, the forward and reverse SDEs (Eqs.~\eqref{eq:SDE_forward_general} and \eqref{eq:SDE_reverse_general}) reduce respectively to:
	\begingroup
	\[
	\mathrm{d}\bm{X}_t\ &= f(t)\bm{X}_t\mathrm{d}t + g(t)\mathrm{d}\bm{B}_t, \label{eq:SDE_forward_simple}\\
	\mathrm{d}\bm{X}_t\ &= \left[ f(t)\bm{X}_t - g^2(t)\nabla_{\bm{x}_t}\log p_t(\bm{X}_t) \right]\mathrm{d}t  + g(t)\mathrm{d}\bar{\bm{B}_t}. \label{eq:SDE_reverse_simple}
	\]
	\endgroup
	The PF-ODE in Eq.~\eqref{eq:pfode_general} similarly simplifies to:
    \begingroup
     \[
	\mathrm{d}\bm{X}_t = \left[ f(t)\bm{X}_t - \frac{1}{2}g^2(t) \nabla_{\bm{x}_t}\log p_t(\bm{X}_t) \right] \mathrm{d}t.
	\label{eq:pfode_simple}
	\]
    \endgroup
 
	Song \emph{et al.}. \cite{song2021scorebased} have shown that when $f(t)=-\frac{1}{2}\beta(t)$ and $g(t)=\sqrt{\beta(t)}$, Eqs.~\eqref{eq:SDE_forward_simple} and \eqref{eq:SDE_reverse_simple} define the \textit{Variance Preserving SDE} (VP-SDE), where $\beta(t)$ is a real-valued function of time $t$ and the total variance remains bounded as $t$ increases. When $f(t)=0$ and $g(t)=\sqrt{\frac{\mathrm{d}\sigma^2(t)}{\mathrm{d}t}}$, the same equations describe the \textit{Variance Exploding SDE} (VE-SDE), in which $\sigma^2(t)$ increases monotonically with $t$ and the variance grows indefinitely over time, reflecting its “exploding” behavior.

	\section{Proof For Theorem \ref{thm:noising_process}}

	To prove Theorem \ref{thm:noising_process}, we formulate the conditional forward diffusion in a more general form:
     \begingroup
	\[
	\mathrm{d}\bm{X}_t = \bm{f}(\bm{X}_t,t)\mathrm{d}t + \bm{G}(t,y)\mathrm{d}\bm{B}_t,
	\label{eq:general_ito_sde_forward}
	\]
     \endgroup
	where $\bm{X}_t\in\mathbb{R}^d$, $\bm{f}(\bm{X}_t,t)\in\mathbb{R}^d$ denotes the drift coefficient, $\bm{G}(t,y)\in \mathbb{R}^{d\times d}$ is the diffusion coefficient depending on $y$, $\bm{B}_t$ is a standard Wiener process, and the initial state $\bm{X}_0$ follows the real data distribution $p_0$. Following \cite{song2021scorebased, karras2022elucidating}, the drift coefficient is simplified as
	\begingroup
    \[
	\bm{f}(\bm{X}_t,t) = f(t)\bm{X}_t,
	\label{eq:drift_simple}
	\]
    \endgroup
	where $f(t)\in\mathbb{R}$ is a real integrable function. \textbf{Unlike \cite{song2021scorebased, karras2022elucidating}, however, $\bm{G}(t,y)$ explicitly depends on the conditioning variable $y$, and yields a matrix rather than a scalar.} Furthermore, we denote
	\begingroup
    \[
	\bm{Q}(t,y)\triangleq \bm{G}(t,y)\bm{G}(t,y)^\intercal.
	\label{eq:Q_GG}
	\]
     \endgroup
	
	Then, Theorem \ref{thm:noising_process} can be seen as a special case of Theorem \ref{thm:noising_process_general}, when $f(t)\equiv0$.
	
	\begin{theorem}
		\label{thm:noising_process_general}
		Given the forward diffusion process defined in Eq.~\eqref{eq:general_ito_sde_forward}, the conditional distribution of $\bm{X}_t$ given $\bm{X}_0 = \bm{x}_0$ is Gaussian:
		\begingroup
        \[
		\bm{X}_t | \bm{X}_0 = \bm{x}_0 \sim \mathcal{N}(\alpha_t\bm{x}_0, \bm{\Sigma}(t,y)),
		\label{eq:add_noise_eq1}
		\]
        \endgroup
		where 
		\begingroup
        \[
		\alpha_t = \exp\left( \int_0^t f(s)\mathrm{d}s \right),
		\]
        \endgroup
		and the covariance matrix is given by
		\begingroup
        \[
		\bm{\Sigma}(t,y) = \int_0^t \left( \frac{\alpha_t}{\alpha_s} \right)^2 \bm{Q}(s,y)\mathrm{d}s.
		\]
        \endgroup
	\end{theorem}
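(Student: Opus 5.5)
The plan is to solve the linear SDE in Eq.~\eqref{eq:general_ito_sde_forward} explicitly with an integrating factor, then read off the conditional law. The drift $f(t)\bm{X}_t$ is linear and the diffusion coefficient $\bm{G}(t,y)$ does not depend on $\bm{X}_t$ (for fixed $y$), so the solution should be an affine function of $\bm{x}_0$ plus a Wiener integral of a deterministic integrand. That integral is Gaussian.

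First, define $\alpha_t=\exp(\int_0^t f(s)\mathrm{d}s)$ and consider the process $\bm{Z}_t=\alpha_t^{-1}\bm{X}_t$. Apply It\^{o}'s product rule (equivalently, It\^{o}'s lemma to the map $(t,\bm{x})\mapsto \alpha_t^{-1}\bm{x}$, which is linear in $\bm{x}$, so there is no second-order term). Using $\mathrm{d}\alpha_t^{-1}=-f(t)\alpha_t^{-1}\mathrm{d}t$, the drift terms cancel and we get $\mathrm{d}\bm{Z}_t=\alpha_t^{-1}\bm{G}(t,y)\mathrm{d}\bm{B}_t$. Integrating from $0$ to $t$ with $\bm{Z}_0=\bm{x}_0$ and multiplying by $\alpha_t$ gives
\[
\bm{X}_t=\alpha_t\bm{x}_0+\int_0^t \frac{\alpha_t}{\alpha_s}\bm{G}(s,y)\,\mathrm{d}\bm{B}_s .
\]
By pathwise uniqueness of strong solutions (the coefficients are Lipschitz in $\bm{x}$, given integrability of $f$ and square-integrability of $\bm{G}$), this is the solution conditional on $\bm{X}_0=\bm{x}_0$.

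Next, the stochastic integral of a deterministic, square-integrable matrix-valued integrand against a standard Wiener process is a centered Gaussian vector. One can argue this by approximating with step functions, whose integrals are finite sums of independent Gaussian increments, and noting that $L^2$ limits of Gaussians are Gaussian. Alternatively, compute the characteristic function directly. Its mean is zero because It\^{o} integrals are martingales. By the (matrix) It\^{o} isometry, its covariance is
\[
\int_0^t (\alpha_t/\alpha_s)^2\bm{G}(s,y)\bm{G}(s,y)^\intercal\mathrm{d}s=\int_0^t(\alpha_t/\alpha_s)^2\bm{Q}(s,y)\mathrm{d}s=\bm{\Sigma}(t,y).
\]
Since $\bm{B}$ is independent of $\bm{X}_0$, conditioning on $\bm{X}_0=\bm{x}_0$ only shifts the mean to $\alpha_t\bm{x}_0$, which yields Eq.~\eqref{eq:add_noise_eq1}.

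The main technical point is justifying Gaussianity and the covariance formula for the matrix-valued Wiener integral. I would handle it through the It\^{o} isometry applied componentwise, $\mathbb{E}[(\int \bm{H}\mathrm{d}\bm{B})(\int \bm{H}\mathrm{d}\bm{B})^\intercal]=\int \bm{H}\bm{H}^\intercal\mathrm{d}s$, combined with the step-function approximation above. Finally, setting $f\equiv 0$ gives $\alpha_t=1$ and $\bm{\Sigma}(t,y)=\int_0^t\dot{\bm{\Sigma}}(s,y)\mathrm{d}s$, which equals $\bm{\Sigma}(t,y)$ since $\bm{\Sigma}(0,y)=\bm{0}$. This recovers Theorem~\ref{thm:noising_process}, using that $\bm{G}=\dot{\bm{\Sigma}}^{1/2}$ is symmetric, so $\bm{Q}=\dot{\bm{\Sigma}}$.
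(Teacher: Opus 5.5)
Your proposal is correct and follows the paper's route. You use the integrating factor $\alpha_t^{-1}$ (the paper's $\Phi(t)$) with It\^{o}'s lemma to get $\bm{X}_t=\alpha_t\bm{x}_0+\int_0^t(\alpha_t/\alpha_s)\bm{G}(s,y)\,\mathrm{d}\bm{B}_s$, then use that a Wiener integral with deterministic integrand is Gaussian. You justify Gaussianity and the covariance (via the It\^{o} isometry) more carefully than the paper, and your covariance correctly keeps the square $(\alpha_t/\alpha_s)^2$, which the paper's final display drops by a typo.
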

	
	\begin{proof}
		
		We apply the method of integrating factor \cite{oksendal2013stochastic} to derive the solution path of the forward diffusion process. Let
		\begingroup
        \[
		\Phi(t) = \exp\left(-\int_0^t f(s)\mathrm{d}s \right),
		\]
        \endgroup
		then we have 
        \begingroup
		\[
		\frac{\mathrm{d}}{\mathrm{d}t}\Phi(t)=-f(t)\Phi(t)
		\label{eq:derivative_Phi}
		\]
        \endgroup
		with $\Phi(0)=1$. Defining $\bm{Y}_t=\Phi(t)\bm{X}_t$ and applying It\^{o}'s lemma \cite{oksendal2013stochastic}, we obtain
		\begingroup
        \[
		\mathrm{d}\bm{Y}_t = \Phi(t)\mathrm{d}\bm{X}_t + \bm{X}_t\mathrm{d}\Phi(t) + \mathrm{d}\langle \Phi(\cdot), \bm{X}_\cdot \rangle_t.
		\label{eq:Yt_SDE}
		\]
        \endgroup
		Since $\Phi(t)$ is deterministic, the quadratic covariation term $\mathrm{d}\langle \Phi(\cdot), \bm{X}_\cdot \rangle_t=0$. Substituting Eqs.~\eqref{eq:general_ito_sde_forward} and \eqref{eq:derivative_Phi} into Eq.~\eqref{eq:Yt_SDE} yields
		\begingroup
        \[
		\mathrm{d}\bm{Y}_t & = \Phi(t)\left( f(t)\bm{X}_t\mathrm{d}t + \bm{G}(t,y)\mathrm{d}\bm{B}_t \right) - f(t)\Phi(t)\bm{X}_t\mathrm{d}t \\
		& = \Phi(t)\bm{G}(t,y)\mathrm{d}\bm{B}_t.
		\label{eq:Yt_SDE_simple}
		\]
        \endgroup
		Using $\Phi(0)=1$, we have
        \begingroup
		\[
		\bm{Y}_t &= \bm{Y}_0 + \int_0^t \Phi(s)\bm{G}(s,y)\mathrm{d}\bm{B}_s \\
		&= \bm{X}_0 + \int_0^t \Phi(s)\bm{G}(s,y)\mathrm{d}\bm{B}_s.
		\label{eq:solution_Yt_SDE}
		\]
        \endgroup
		Multiplying both sides of Eq.~\eqref{eq:solution_Yt_SDE} by
        \begingroup
		\[
		\Phi(t)^{-1} = \exp\left( \int_0^t f(s)\mathrm{d}s \right),
		\]
        \endgroup
		gives
        \begingroup
		\[
		\bm{X}_t &= \Phi(t)^{-1}\bm{Y}_t \\
		&= \Phi(t)^{-1}\bm{X}_0 + \Phi(t)^{-1}\int_0^t \Phi(s)\bm{G}(s,y)\mathrm{d}\bm{B}_s
		\]
        \endgroup
		Since $\Phi(t)^{-1}$ is deterministic, it can be moved inside the integral:
		\begingroup
        \[
		\bm{X}_t & = \Phi(t)^{-1}\bm{X}_0  + \int_0^t \frac{\Phi(t)^{-1}}{\Phi(s)^{-1}}\bm{G}(s,y)\mathrm{d}\bm{B}_s \\
		& = \alpha_t\bm{X}_0  + \int_0^t \frac{\alpha_t}{\alpha_s}\bm{G}(s,y)\mathrm{d}\bm{B}_s,
		\label{eq:Xt_SDE_solution}
		\]
        \endgroup
		where $\alpha_t = \Phi(t)^{-1}$. As $\frac{\alpha_t}{\alpha_s}\bm{G}(s,y)$ is deterministic, the Itô integral in Eq.~\eqref{eq:Xt_SDE_solution} is Gaussian with mean $\bm{0}$ and covariance matrix
		\begingroup
        \[
		\bm{\Sigma}(t,y) = \int_0^t \left( \frac{\alpha_t}{\alpha_s} \right) \bm{Q}(s,y) \mathrm{d}s.
		\]
        \endgroup
		
		Therefore, conditional on $\bm{X}_0=\bm{x}_0$, 
		\begingroup
        \[
		\bm{X}_t | \bm{X}_0 = \bm{x}_0 \sim \mathcal{N}(\alpha_t\bm{x}_0, \bm{\Sigma}(t,y)),
		\]
        \endgroup
		
	\end{proof}
	
	\begin{corollary}[Nosing Perturbation] By reparameterization, Eq.\eqref{eq:add_noise_eq1} can be rewritten as
		\begingroup
        \[
		\bm{X}_t = \alpha_t \bm{x}_0 + \bm{\Sigma}(t,y)^{\frac{1}{2}}\bm{\varepsilon}, \quad \bm{\varepsilon}\sim\mathcal{N}(\bm{0},\bm{I}), 
		\label{eq:add_noise_eq2}
		\]
        \endgroup
		where $\bm{\Sigma}(t,y)^{\frac{1}{2}}$ satisfies $\bm{\Sigma}(t,y)^{\frac{1}{2}}\bm{\Sigma}(t,y)^{\frac{1}{2}}=\bm{\Sigma}(t,y)$.
	\end{corollary}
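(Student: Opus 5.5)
The statement immediately above is the corollary: with $\varepsilon\sim\mathcal{N}(\bm 0,\bm I)$ and the symmetric square root $\bm{\Sigma}(t,y)^{\frac12}$, the variable $\alpha_t\bm{x}_0+\bm{\Sigma}(t,y)^{\frac12}\bm{\varepsilon}$ has the law in Eq.~\eqref{eq:add_noise_eq1}. I will prove it by a direct computation on Gaussian vectors, taking Theorem~\ref{thm:noising_process_general} as given.

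\textbf{Step 1: the square root exists.} By Theorem~\ref{thm:noising_process_general}, $\bm{\Sigma}(t,y)$ is an integral of the matrices $(\alpha_t/\alpha_s)^2\bm{Q}(s,y)$. Each $\bm{Q}(s,y)=\bm{G}(s,y)\bm{G}(s,y)^\intercal$ is symmetric positive semidefinite, and each scalar weight is positive, so $\bm{\Sigma}(t,y)$ is symmetric positive semidefinite. The spectral theorem then gives an orthogonal $\bm{U}$ and a diagonal $\bm{\Lambda}\succeq 0$ with $\bm{\Sigma}=\bm{U}\bm{\Lambda}\bm{U}^\intercal$. Setting $\bm{\Sigma}^{\frac12}=\bm{U}\bm{\Lambda}^{\frac12}\bm{U}^\intercal$ produces a symmetric matrix with $\bm{\Sigma}^{\frac12}\bm{\Sigma}^{\frac12}=\bm{\Sigma}$. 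In the paper's concrete setting, Eq.~\eqref{eq:Sigma} already gives $\bm{\Sigma}$ as diagonal, so the square root can simply be taken entrywise.

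\textbf{Step 2: identify the law.} An affine image of a standard Gaussian vector is Gaussian, so $\bm{Z}=\alpha_t\bm{x}_0+\bm{\Sigma}^{\frac12}\bm{\varepsilon}$ is Gaussian. Its mean is $\alpha_t\bm{x}_0$ because $\mathbb{E}\bm{\varepsilon}=\bm 0$. Its covariance is $\bm{\Sigma}^{\frac12}\,\mathbb{E}[\bm{\varepsilon}\bm{\varepsilon}^\intercal]\,(\bm{\Sigma}^{\frac12})^\intercal=\bm{\Sigma}^{\frac12}\bm{\Sigma}^{\frac12}=\bm{\Sigma}$, where the last equality uses symmetry of the square root. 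A Gaussian law is determined by its mean and covariance; to cover the possibly singular case, I will compare characteristic functions, $\exp(i\bm u^\intercal\alpha_t\bm x_0-\tfrac12\bm u^\intercal\bm\Sigma\bm u)$. This shows that $\bm{Z}$ has exactly the law in Eq.~\eqref{eq:add_noise_eq1}, so $\bm{X}_t\mid\bm{X}_0=\bm{x}_0$ and $\bm{Z}$ are equal in distribution, which is the claimed reparameterization.

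There is no real obstacle in this argument. The only point needing care is interpretation: the identity holds in distribution, not pathwise. Pathwise, $\bm{X}_t$ is the Itô integral in Eq.~\eqref{eq:Xt_SDE_solution} and does not literally equal $\bm{Z}$; I will state this explicitly. I also note in passing that the covariance display at the end of the proof of Theorem~\ref{thm:noising_process_general} omits the square on $\alpha_t/\alpha_s$. By the Itô isometry the correct weight is $(\alpha_t/\alpha_s)^2$, as in the theorem statement. For $f\equiv0$ the two agree, since then $\alpha_t=\alpha_s=1$.
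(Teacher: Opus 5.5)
Your proposal is correct and follows the paper's approach: the paper states the corollary ``by reparameterization'' without further detail, and you spell out that standard argument. You were right to use the symmetric root, since the bare condition $\bm{\Sigma}(t,y)^{\frac{1}{2}}\bm{\Sigma}(t,y)^{\frac{1}{2}}=\bm{\Sigma}(t,y)$ does not by itself give $\bm{\Sigma}(t,y)^{\frac{1}{2}}(\bm{\Sigma}(t,y)^{\frac{1}{2}})^\intercal=\bm{\Sigma}(t,y)$; your side remarks (equality in distribution rather than pathwise, and the missing square on $\alpha_t/\alpha_s$ in the final covariance display of the proof of Theorem~\ref{thm:noising_process_general}) are also accurate.
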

	
	
	
	

	\section{Deriving Vicinal Score Estimate}\label{supp:derive_vicinal_score}

    {\setlength{\parindent}{0cm} \textit{(1) Deriving Eq.~\eqref{eq:vicinal_denoiser_loss}:} } 
    \begingroup
	\small
	\[
		& \mathcal{L}(D;y,\bm{\Sigma}) \\
		= &\mathbb{E}_{\bm{X}\sim p_\text{data}(\bm{x}|y)}\mathbb{E}_{\tilde{\bm{X}}\sim\mathcal{N}(\bm{X},\bm{\Sigma})}\left[ \left\| D(\tilde{\bm{X}},y,\bm{\Sigma}) - \bm{X} \right\|_2^2  \right]\\
		= & \mathbb{E}_{\bm{X}\sim p_\text{data}(\bm{x}|y)} \left[ \int \mathcal{N}(\tilde{\bm{x}}; \bm{X}, \bm{\Sigma}) \left\| D(\tilde{\bm{x}},y,\bm{\Sigma}) - \bm{X} \right\|_2^2   \mathrm{d} \tilde{\bm{x}} \right] \\
        & \text{(Using the property of the Dirac delta function.)} \\
		\approx & C_y\sum_{i=1}^{N} \left[ W_{i,y} \int \mathcal{N}(\tilde{\bm{x}}; \bm{x}_{0,i}, \bm{\Sigma}) \left\| D(\tilde{\bm{x}},y,\bm{\Sigma}) - \bm{x}_{0,i} \right\|_2^2   \mathrm{d} \tilde{\bm{x}} \right] \\
		= & C_y \int \underbrace{\left[ \sum_{i=1}^{N}  W_{i,y} \mathcal{N}(\tilde{\bm{x}}; \bm{x}_{0,i}, \bm{\Sigma}) \left\| D(\tilde{\bm{x}},y,\bm{\Sigma}) - \bm{x}_{0,i} \right\|_2^2 \right]}_{\triangleq \hat{\mathcal{L}}^v(D; \tilde{\bm{x}}, y,\bm{\Sigma})} \mathrm{d} \tilde{\bm{x}} \\
		\triangleq & \hat{\mathcal{L}}^v(D;y,\bm{\Sigma}) 
	\]
	\endgroup

    {\setlength{\parindent}{0cm} \textit{(2) Deriving Eq.~\eqref{eq:D_star}:} } 
    
    As pointed out by Karras et~al.~\cite{karras2022elucidating}, Eq.~\eqref{eq:minimization_D_ast} is a convex optimization problem with respect to $D$. Thus, we can derive $D^\ast$ in Eq.~\eqref{eq:D_star} by setting the gradient of $\hat{\mathcal{L}}^v(D; \tilde{\bm{x}}, y,\bm{\Sigma})$ with respect to $D$ to zero:
	\begingroup
	\small
	\[
		& \bm{0} = \nabla_{D}\hat{\mathcal{L}}^v(D; \tilde{\bm{x}}, y,\bm{\Sigma}) \\
		\Rightarrow & \bm{0} = \nabla_{D}\left[ \sum_{i=1}^{N} W_{i,y} \mathcal{N}(\tilde{\bm{x}}; \bm{x}_{0,i}, \bm{\Sigma}) \left\| D(\tilde{\bm{x}},y,\bm{\Sigma}) - \bm{x}_{0,i} \right\|_2^2 \right] \\
		\Rightarrow & \bm{0} = \sum_{i=1}^{N} W_{i,y}\mathcal{N}(\tilde{\bm{x}}; \bm{x}_{0,i}, \bm{\Sigma}) \nabla_{D}\left[ \left\| D(\tilde{\bm{x}},y,\bm{\Sigma}) - \bm{x}_{0,i} \right\|_2^2 \right] \\
		\Rightarrow & \bm{0} = \sum_{i=1}^{N} W_{i,y}\mathcal{N}(\tilde{\bm{x}}; \bm{x}_{0,i}, \bm{\Sigma})\left[ 2D(\tilde{\bm{x}},y,\bm{\Sigma}) - 2\bm{x}_{0,i} \right] \\
		\Rightarrow & D^\ast(\tilde{\bm{x}},y,\bm{\Sigma}) = \frac{\sum_{i=1}^{N} W_{i,y}\mathcal{N}(\tilde{\bm{x}}; \bm{x}_{0,i}, \bm{\Sigma})\bm{x}_{0,i}}{\sum_{i=1}^{N} W_{i,y}\mathcal{N}(\tilde{\bm{x}}; \bm{x}_{0,i}, \bm{\Sigma})}
	\]
	\endgroup

    {\setlength{\parindent}{0cm} \textit{(3) Deriving Eq.~\eqref{eq:p_sigma_vic_est}:} } 
    \begingroup
	\small
    \[
	p_{\bm{\Sigma}}(\tilde{\bm{x}}|y) 
	= & \left[ p_\text{data} \ast \mathcal{N}(\bm{x}_0, \bm{\Sigma}) \right](\tilde{\bm{x}}|y) \\
	\approx & \int \hat{p}^v_\text{data}(\bm{x}_0|y) \mathcal{N}(\tilde{\bm{x}} ; \bm{x}_0, \bm{\Sigma}) \mathrm{d} \bm{x}_0 \\
	= & \int \left[ C_y\sum_{i=1}^{N}W_{i,y}\delta(\bm{x}_0-\bm{x}_{0,i}) \right] \mathcal{N}(\tilde{\bm{x}} ; \bm{x}_0, \bm{\Sigma})  \mathrm{d} \bm{x}_0  \\
	= & C_y\sum_{i=1}^{N} W_{i,y} \int \mathcal{N}(\tilde{\bm{x}} ; \bm{x}_0, \bm{\Sigma}) \delta(\bm{x}_0-\bm{x}_{0,i}) \mathrm{d} \bm{x}_0 \\
	= & C_y\sum_{i=1}^{N} W_{i,y} \mathcal{N}(\tilde{\bm{x}} ; \bm{x}_{0,i}, \bm{\Sigma}) \triangleq   \hat{p}^v_{\bm{\Sigma}}(\tilde{\bm{x}}|y). 
	\]
     \endgroup

    {\setlength{\parindent}{0cm} \textit{(4) Deriving Eq.~\eqref{eq:grad_p_sigma_vic_est}:} } 
    \begingroup
	\small
    \[
	& \nabla_{\tilde{\bm{x}}}\log\hat{p}^v_{\bm{\Sigma}}(\tilde{\bm{x}}|y) \\
	= & \frac{\sum_{i=1}^{N} W_{i,y} \nabla_{\tilde{\bm{x}}}\mathcal{N}(\tilde{\bm{x}} ; \bm{x}_{0,i}, \bm{\Sigma})}{\sum_{i=1}^{N} W_{i,y} \mathcal{N}(\tilde{\bm{x}} ; \bm{x}_{0,i}, \bm{\Sigma})}. \\
	= & \frac{\sum_{i=1}^{N} W_{i,y} \mathcal{N}(\tilde{\bm{x}} ; \bm{x}_{0,i}, \bm{\Sigma})\bm{\Sigma}^{-1}(\bm{x}_{0,i}-\tilde{\bm{x}})}{\sum_{i=1}^{N} W_{i,y} \mathcal{N}(\tilde{\bm{x}} ; \bm{x}_{0,i}, \bm{\Sigma})} \\
	= & \bm{\Sigma}^{-1}\left( \frac{\sum_{i=1}^{N} W_{i,y} \mathcal{N}(\tilde{\bm{x}} ; \bm{x}_{0,i}, \bm{\Sigma})\bm{x}_{0,i}}{\sum_{i=1}^{N} W_{i,y} \mathcal{N}(\tilde{\bm{x}} ; \bm{x}_{0,i}, \bm{\Sigma})} - \tilde{\bm{x}}  \right).
	\label{eq:supp_grad_p_sigma_vic_est}
	\]
    \endgroup

	\section{Deriving Vicinal Training Loss}\label{supp:derive_vicinal_loss}

    The complete derivation of Eq.~\eqref{eq:hat_L_theta_Sigma} is shown as follows:
	\begingroup
	\small
	\[
		&\hat{\mathcal{L}}(\bm{\theta};\bm{\Sigma}) \\
		= & \mathbb{E}_{\substack{(\bm{X},Y)\sim \hat{p}^v_\text{data}(\bm{x},y) \\ \bm{\varepsilon}\sim\mathcal{N}(\bm{0},\bm{\Sigma})}}\left[ \left\| \Lambda_{\bm{\Sigma}}^{\frac{1}{2}} \left( D_{\bm{\theta}}(\bm{X}+\bm{\varepsilon},y,\bm{\Sigma}) - \bm{X} \right) \right\|_2^2 \right] \\
		= &  \int \mathbb{E}_{\bm{\varepsilon}\sim\mathcal{N}(\bm{0},\bm{\Sigma})} \left\| \Lambda_{\bm{\Sigma}}^{\frac{1}{2}} \left( D_{\bm{\theta}}(\bm{x}+\bm{\varepsilon},y,\bm{\Sigma}) - \bm{x} \right) \right\|_2^2 \\
		& \qquad\qquad\qquad \cdot \left[ \frac{1}{N}\sum_{j=1}^N\exp\left( -\frac{(y-y_j)^2}{2\sigma_\text{KDE}^2} \right) \right] \\
		& \qquad\qquad\qquad \cdot \left[ C_y \sum_{i=1}^{N}W_{i,y}\delta(\bm{x}-\bm{x}_{0,i})  \right] \mathrm{d}\bm{x}\mathrm{d}y \\
		= & \frac{C}{N}\sum_{i=1}^N\sum_{j=1}^N \mathbb{E}_{\substack{ \bm{\varepsilon}\sim\mathcal{N}(\bm{0},\bm{\Sigma}) \\ \eta\sim\mathcal{N}(0,\sigma_\text{KDE}^2) }} W_{i,y_j+\eta}\left\| \Lambda_{\bm{\Sigma}}^{\frac{1}{2}} \right. \\
		& \qquad\qquad\qquad  \left. \vphantom{\Lambda_{\bm{\Sigma}}^{\frac{1}{2}}} \cdot \left( D_{\bm{\theta}}(\bm{x}_i+\bm{\varepsilon},y_j+\eta,\bm{\Sigma}) - \bm{x}_i \right) \right\|_2^2
	\]
	\endgroup
	where $C$ absorbs all normalizing constants, and $\eta\triangleq y-y_j$.

	\section{Experimental Details}\label{supp:exp_details}

    The main experimental settings for implementing iCCDM are summarized in Table~\ref{tab:train_setup_iccdm}. EDM-related parameters introduced in~\cite{karras2022elucidating} and kept unchanged in our experiments are set as follows: $\sigma_\text{min}=0.002$, $\sigma_\text{max}=80$, $\rho=7.0$, $P_\text{mean}=-1.2$, $P_\text{std}=1.2$, $S_\text{churn}=80$, $S_\text{tmin}=0.05$, $S_\text{tmax}=50$, and $S_\text{noise}=1.003$. Sampling speed and memory consumption are evaluated on an Ubuntu server equipped with NVIDIA RTX 4090D-48G GPUs.
 
	\begin{table}[!h] 
		\setlength{\tabcolsep}{1mm} 
		\centering
		\caption{ Implementation Details of iCCDM.}
			\begin{tabular}{cl}
				\toprule
				\textbf{Dataset} &  \textbf{Setup} \\
				\midrule
				
				\begin{tabular}[c]{@{}c@{}} RC-49 \\ {(64$\times$64)}\end{tabular} & \begin{tabular}[c]{@{}l@{}} CCDM UNet, steps=100K, lr=1e-4, bs=128 \\ Hard AV, $N_\text{AV}=50$, $\lambda_y^t=\lambda_y^s=0.001$, \\ CFG's $\gamma=1.2$, SDE-32 \end{tabular} \\
				\midrule

				\begin{tabular}[c]{@{}c@{}} Cell-200 \\ {(64$\times$64)}\end{tabular} & \begin{tabular}[c]{@{}l@{}} CCDM UNet, steps=50K, lr=5e-5, bs=64 \\ Hard AV, $N_\text{AV}=20$, $\lambda_y^t=\lambda_y^s=0.01$, \\ CFG's $\gamma=1.5$, SDE-32 \end{tabular} \\			
				\midrule

				\begin{tabular}[c]{@{}c@{}} UTKFace \\ {(64$\times$64)}\end{tabular} & \begin{tabular}[c]{@{}l@{}} CCDM UNet, steps=100K, lr=1e-4, bs=128 \\ Hard AV, $N_\text{AV}=400$, $\lambda_y^t=\lambda_y^s=0.05$, \\ CFG's $\gamma=1.5$, SDE-32 \end{tabular} \\			
				\midrule
				
				\begin{tabular}[c]{@{}c@{}} UTKFace \\ {(128$\times$128)}\end{tabular} & \begin{tabular}[c]{@{}l@{}} CCDM UNet, steps=200K, lr=1e-5, bs=128 \\ Hard AV, $N_\text{AV}=400$, $\lambda_y^t=\lambda_y^s=0.01$, \\ CFG's $\gamma=1.5$, SDE-32 \end{tabular} \\			
				\midrule

				\begin{tabular}[c]{@{}c@{}} UTKFace \\ {(192$\times$192)}\end{tabular} & \begin{tabular}[c]{@{}l@{}} CCDM UNet, steps=800K, lr=1e-5, bs=112 \\ Hard AV, $N_\text{AV}=400$, $\lambda_y^t=\lambda_y^s=0.01$, \\ CFG's $\gamma=1.5$, SDE-32 \end{tabular} \\			
				\midrule

				\begin{tabular}[c]{@{}c@{}} UTKFace \\ {(256$\times$256)}\end{tabular} & \begin{tabular}[c]{@{}l@{}} DiT-B/4, steps=800K, lr=1e-5, bs=32 \\ Hard AV, $N_\text{AV}=400$, $\lambda_y^t=\lambda_y^s=0.01$, \\ CFG's $\gamma=1.5$, SDE-32 \end{tabular} \\			
				\midrule

				\begin{tabular}[c]{@{}c@{}} Steering \\ Angle \\ {(64$\times$64)}\end{tabular} & \begin{tabular}[c]{@{}l@{}} CCDM UNet, steps=100K, lr=1e-4, bs=128 \\ Hard AV, $N_\text{AV}=10$, $\lambda_y^t=\lambda_y^s=2.5$, \\ CFG's $\gamma=1.5$, SDE-32 \end{tabular} \\			
				\midrule
				
				\begin{tabular}[c]{@{}c@{}} Steering \\ Angle \\ {(128$\times$128)}\end{tabular} & \begin{tabular}[c]{@{}l@{}} CCDM UNet, steps=400K, lr=5e-5, bs=112 \\ Hard AV, $N_\text{AV}=10$, $\lambda_y^t=0.01$, \\ $\lambda_y^s=0.1$, CFG's $\gamma=1.5$, SDE-32 \end{tabular} \\			
				\midrule

				\begin{tabular}[c]{@{}c@{}} Steering \\ Angle \\ {(256$\times$256)}\end{tabular} & \begin{tabular}[c]{@{}l@{}} DiT-B/4, steps=400K, lr=1e-5, bs=36 \\ Hard AV, $N_\text{AV}=20$, $\lambda_y^t=0.01$, \\ $\lambda_y^s=0.1$, CFG's $\gamma=2.0$, SDE-32 \end{tabular} \\			
    
                \bottomrule
			\end{tabular}%
		\label{tab:train_setup_iccdm}%
	\end{table}%

	\section{Implementation Setups of Text-to-Image Models}\label{supp:comp_with_T2I}
	
	We fine-tuned four text-to-image diffusion models: (i) SD v1.5 with \textbf{full fine-tuning} of the denoising UNet under the DDPM formulation, and (ii) SD 3 Medium, Qwen-Image, and FLUX.1 with \textbf{LoRA} adapters injected into their transformer-based denoisers. For SD v1.5, training follows latent diffusion noise prediction: an image is encoded into latents by the VAE, Gaussian noise is added at a sampled timestep, and the UNet is optimized with an MSE objective to predict the injected noise (with optional timestep reweighting, such as Min-SNR). For SD 3/Qwen/FLUX, training follows a Flow-Matching setup: noisy inputs are formed by linear interpolation between clean latents and noise using the scheduler's $\sigma$, and the model is trained with a possibly $\sigma$-weighted MSE objective to match the velocity field target $v = \epsilon - x_0$ (SD 3 also supports output preconditioning to predict $x_0$). In all cases, we trained for approximately \textbf{two epochs} on the available data, which we considered a reasonable fine-tuning budget.

    We used fixed prompt templates: UTKFace uses ``a portrait of the face of [age] year old.'', SteeringAngle uses ``a road view with steering angle of [angle] degrees.'', and RC-49 uses ``a photo of a chair at yaw angle [yaw] degrees.'' Sampling is performed by loading the corresponding base pipeline, swapping in a full UNet checkpoint (SD v1.5) or attaching LoRA weights (SD 3/Qwen/FLUX), followed by conditional generation under classifier-free guidance variants (standard CFG for SD v1.5/SD 3/Qwen, and guidance embedding for FLUX). For computing evaluation metrics, we use a default directory layout for per-label PNG outputs and an evaluation-oriented export that produces PNGs for NIQE and H5 containers for SFID/Diversity/Label Score. To ensure engineering stability on large backbones, Qwen and FLUX run in \texttt{torch.bfloat16} to avoid OOM errors. Qwen further uses a \textbf{two-stage loading strategy}, where all text embeddings are precomputed and indexed during training/sampling, reducing peak VRAM usage by about \textbf{16 GiB}.



\end{document}